\newtheorem{theorem}{Theorem}
\newtheorem{lemma}[theorem]{Lemma}
\newtheorem{corollary}[theorem]{Corollary}
\newtheorem{remark}{Remark}
\theoremstyle{definition}
\newtheorem{definition}{Definition}
\let\inf\undef
\DeclareMathOperator*{\inf}{\vphantom{p}inf}
\let\sup\undef
\DeclareMathOperator*{\sup}{\vphantom{p}sup}
\newcommand{\mbb}[1]{\mathbb{#1}}
\newcommand{\mc}[1]{\mathcal{#1}}
\newcommand{\mrm}[1]{\mathrm{#1}}
\newcommand{\argmin}[1]{\underset{#1}{\mrm{argmin}} \ }
\newcommand{\reals}{\mathbb{R}}
\newcommand{\En}{\mathbb{E}}  
\renewcommand{\Pr}[1]{\mathbb{P}\left(#1\right)}
\newcommand{\tr}{\ensuremath{{\scriptscriptstyle\mathsf{T}}}}
\newcommand\cN{\mathcal{N}}
\newcommand\X{\mathcal{X}}
\newcommand\Z{\mathcal{Z}}
\newcommand\F{\mathcal{F}}
\newcommand\G{\mathcal{G}}
\newcommand\cH{\mathcal{H}}
\newcommand\Rad{\mathfrak{R}}
\def\deq{\triangleq}
\newcommand{\excessloss}{\mathcal{E}}
\newcommand{\emp}{\widehat{\En}}
\newcommand{\pred}{\widehat{f}}
\renewcommand{\star}{\text{star}}
\renewcommand{\H}{\mathcal{H}}
\newcommand{\B}{\mathcal{B}}
\renewcommand{\S}{\mathcal{S}}
\newcommand{\C}{\mathcal{C}}
\title{Learning with Square Loss: Localization through Offset Rademacher Complexity}
\date{}
\author{Tengyuan Liang \thanks{Department of Statistics, The Wharton School, University of Pennsylvania} \and Alexander Rakhlin \footnotemark[1] \and Karthik Sridharan \thanks{Department of Computer Science, Cornell University}}
\begin{document}

\maketitle

\begin{abstract}
	We consider regression with square loss and general classes of functions without the boundedness assumption. We introduce a notion of offset Rademacher complexity that provides a transparent way to study localization both in expectation and in high probability. For any (possibly non-convex) class, the excess loss of a two-step estimator is shown to be upper bounded by this offset complexity through a novel geometric inequality. In the convex case, the estimator reduces to an empirical risk minimizer. The method recovers the results of \citep{RakSriTsy15} for the bounded case while also providing guarantees without the boundedness assumption. 
\end{abstract}

\section{Introduction}

Determining the finite-sample behavior of risk in the problem of regression is arguably one of the most basic problems of Learning Theory and Statistics. This behavior can be studied in substantial generality with the tools of empirical process theory. When functions in a given convex class are uniformly bounded, one may verify the so-called ``Bernstein condition.'' The condition---which relates the variance of the increments of the empirical process to their expectation---implies a certain localization phenomenon around the optimum and forms the basis of the analysis via \emph{local Rademacher complexities}. The technique has been developed in \citep{KolPan00,koltchinskii2011oracle,bousquet2002some,bartlett2005local,bousquet2002concentration}, among others, based on Talagrand's celebrated concentration inequality for the supremum of an empirical process.

In a recent pathbreaking paper, \cite{Mendelson14} showed that a large part of this  heavy machinery is not necessary for obtaining tight upper bounds on excess loss, even---and especially---if functions are unbounded. Mendelson observed that only one-sided control of the tail  is required in the deviation inequality, and, thankfully, it is the tail that can be controlled under very mild assumptions.

In a parallel line of work, the search within the online learning setting for an analogue of  ``localization'' has led to a notion of an ``offset'' Rademacher process \citep{RakSri14nonparam}, yielding---in a rather clean manner---optimal rates for minimax regret in online supervised learning. It was also shown that the supremum of the offset process is a lower bound on the minimax value, thus  establishing its intrinsic nature. The present paper blends the ideas of \cite{Mendelson14} and \cite{RakSri14nonparam}. We introduce the notion of an offset Rademacher process for i.i.d. data and show that the supremum of this process upper bounds (both in expectation and in high probability) the excess risk of an empirical risk minimizer (for convex classes) and a two-step Star estimator of \cite{audibert2007progressive} (for arbitrary classes). The statement holds under a weak assumption even if functions are not uniformly bounded.


The offset Rademacher complexity provides an intuitive alternative to the machinery of local Rademacher averages. Let us recall that the Rademacher process indexed by a function class $\G\subseteq \reals^\X$ is defined as a stochastic process $g\mapsto \frac{1}{n}\sum_{t=1}^n \epsilon_t g(x_t)$ where $x_1,\ldots,x_n \in\X$ are held fixed and $\epsilon_1,\ldots,\epsilon_n$ are i.i.d. Rademacher random variables. We define the offset Rademacher process as a stochastic process
$$g\mapsto \frac{1}{n}\sum_{t=1}^n \epsilon_t g(x_t) - c g(x_t)^2$$
for some $c\geq 0$. The process itself captures the notion of localization: when $g$ is large in magnitude, the negative quadratic term acts as a compensator and ``extinguishes'' the fluctuations of the term involving Rademacher variables. The supremum of the process will be termed \emph{offset Rademacher complexity}, and one may expect that this complexity is of a smaller order than the classical Rademacher averages (which, without localization, cannot be better than the rate of  $n^{-1/2}$). 

The self-modulating property of the offset complexity can be illustrated on the canonical example of a linear class $\G = \{x\mapsto w^\tr x: w\in \reals^p\}$, in which case the offset Rademacher complexity becomes
$$ \frac{1}{n}\sup_{w\in\reals^p} \left\{ w^\tr \left(\sum_{t=1}^n \epsilon_t x_t\right) - c \|w\|_\Sigma^2 \right\} = \frac{1}{4cn} \left\|\sum_{t=1}^n \epsilon_t x_t\right\|^2_{\Sigma^{-1}}$$
where $\Sigma=\sum_{t=1}^n x_tx_t^\tr$. Under mild conditions, the above expression is of the order $\mathcal{O}\left(p/n\right)$ in expectation and in high probability --- a familiar rate achieved by the ordinary least squares, at least in the case of a well-specified model. We refer to Section~\ref{sec:examples} for the precise statement for both well-specified and misspecified case.

Our contributions can be summarized as follows. First, we show that offset Rademacher complexity is an upper bound on excess loss of the proposed estimator, both in expectation and in deviation. We then extend the chaining technique to quantify the behavior of the supremum of the offset process in terms of covering numbers. By doing so, we recover the rates of aggregation established in \citep{RakSriTsy15} and, unlike the latter paper, the present method does not require boundedness (of the noise and functions). We provide a lower bound on minimax excess loss in terms of offset Rademacher complexity, indicating its intrinsic nature for the problems of regression. While our in-expectation results for bounded functions do not require any assumptions, the high probability statements rest on a lower isometry assumption that holds, for instance, for subgaussian classes. We show that offset Rademacher complexity can be further upper bounded by the fixed-point complexities defined by Mendelson \cite{Mendelson14}. We conclude with the analysis of ordinary least squares.


\section{Problem Description and the Estimator}

Let $\F$ be a class of functions on a probability space $(\X,P_X)$. The response is given by an unknown random variable $Y$, distributed jointly with $X$ according to $P=P_X \times P_{Y|X}$. We observe a sample $(X_1,Y_1),\ldots,(X_n,Y_n)$ distributed i.i.d. according to $P$ and aim to construct an estimator $\pred$ with small excess loss $\excessloss(\pred)$, where
\begin{align}
	\excessloss(g) ~\deq~ \En (g-Y)^2 - \inf_{f\in\F} \En(f-Y)^2
\end{align}
and $\En(f-Y)^2 = \En(f(X)-Y)^2$ is the expectation with respect to $(X,Y)$. Let $\emp$ denote the empirical expectation operator and define the following two-step procedure:
\begin{align}
	\label{eq:def_estimator}
	\widehat{g}=\argmin{f\in\F} \emp(f(X)-Y)^2, ~~~~ \pred=\argmin{f\in \star(\F,\widehat{g})} \emp(f(X)-Y)^2
\end{align}
where $\star(\F,g)=\{\lambda g+(1-\lambda)f: f\in\F, \lambda\in[0,1]\}$ is the star hull of $\F$ around $g$. (we abbreviate $\star(\F,0)$ as $\star(\F)$.) This two-step estimator was introduced (to the best of our knowledge) by  \cite{audibert2007progressive} for a finite class $\F$. We will refer to the procedure as the Star estimator. Audibert showed that this method is deviation-optimal for finite aggregation  --- the first such result, followed by other estimators with similar properties \citep{lecue2009aggregation,dai2012deviation} for the finite case. We present analysis that quantifies the behavior of this method for arbitrary classes of functions. The method has several nice features. First, it provides an alternative to the 3-stage discretization method of \cite{RakSriTsy15}, does not require the prior knowledge of the entropy of the class, and goes beyond the bounded case.  Second, it enjoys an upper bound of offset Rademacher complexity via relatively routine arguments under rather weak assumptions. Third, it naturally reduces to empirical risk minimization for convex classes (indeed, this happens whenever $\star(\F,\widehat{g})=\F$).

Let $f^*$ denote the minimizer 
$$
f^* = \argmin{f \in \F} \En (f(X) - Y)^2,
$$
and let $\xi$ denote the ``noise''
$$
\xi = Y - f^*.
$$
We say that the model is misspecified if the regression function $\En[Y|X=x]\notin \F$, which means $\xi$ is not zero-mean. Otherwise, we say that the model is well-specified.

\section{A Geometric Inequality}

We start by proving a geometric inequality for the Star estimator. This deterministic inequality holds conditionally on $X_1,\ldots,X_n$, and therefore reduces to a problem in $\reals^n$.

\begin{lemma}[Geometric Inequality]
	\label{lem:angle_ineq}
	The two-step estimator $\pred$ in \eqref{eq:def_estimator} satisfies
\begin{align}
	\label{claim.angle}
	\emp(h - Y)^2 - \emp(\pred - Y)^2 \geq c \cdot \emp(\pred - h)^2 
\end{align}
for any $h\in\F$ and $c= 1/18$. If $\F$ is convex, \eqref{claim.angle} holds with $c=1$. Moreover, if $\F$ is a linear subspace, \eqref{claim.angle} holds with equality and $c=1$ by the Pythagorean theorem. 
\end{lemma}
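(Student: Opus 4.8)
The plan is to reduce \eqref{claim.angle}, which is conditional on $X_1,\dots,X_n$, to a statement in $\reals^n$ and then to a one-variable estimate. Equip $\reals^n$ with $\langle u,v\rangle=\tfrac1n\sum_t u_tv_t$ and the norm $\|\cdot\|$ of $\emp$, identify $f$ with $(f(X_1),\dots,f(X_n))$, and write $Y=(Y_1,\dots,Y_n)$. Then $\widehat g$ is a nearest point of $\F$ to $Y$ and $\pred$ is a metric projection of $Y$ onto the star hull $\star(\F,\widehat g)$. Expanding $\|h-Y\|^2=\|h-\pred\|^2+2\langle h-\pred,\,\pred-Y\rangle+\|\pred-Y\|^2$, inequality \eqref{claim.angle} is equivalent to the ``obtuse angle up to slack'' bound
\begin{align}
\langle \pred-Y,\ h-\pred\rangle\ \geq\ -\tfrac{1-c}{2}\,\|h-\pred\|^2 .
\end{align}

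\noindent\textbf{Easy cases.} If $\F$ is convex then $\star(\F,\widehat g)=\F$, so $\pred=\widehat g$ is a projection onto a convex set and $\langle\pred-Y,h-\pred\rangle\geq0$ for all $h\in\F$, which is the display with $c=1$; if $\F$ is a subspace this projection is orthogonal and we get equality. If $\pred=\widehat g$ (in particular whenever the ERM over $\F$ already minimizes over the whole star hull), then $\widehat g$ minimizes $\|\cdot-Y\|^2$ over $[\widehat g,h]\subseteq\star(\F,\widehat g)$ and endpoint optimality gives $\langle\widehat g-Y,h-\widehat g\rangle\geq0$; again $c=1$.

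\noindent\textbf{The main case $\pred\neq\widehat g$.} Here I would extract three facts. (i) Since $\star(\F,\widehat g)=\bigcup_{f\in\F}[\widehat g,f]$, the point $\pred$ lies on some $[\widehat g,\tilde f]$ with $\tilde f\in\F$; it is not the endpoint $\widehat g$ by assumption, and it cannot be the endpoint $\tilde f$, since $\tilde f\in\F$ together with $\|\pred-Y\|\leq\|\widehat g-Y\|$ and $\|\widehat g-Y\|\leq\|\pred-Y\|$ would force $\|\pred-Y\|=\|\widehat g-Y\|$ and hence $\pred=\widehat g$. So $\pred$ is relatively interior, and first-order optimality along the \emph{line} through $\widehat g$ and $\tilde f$ gives $\pred-Y\perp(\tilde f-\widehat g)$; in particular $\langle\pred-Y,\widehat g-\pred\rangle=0$. (ii) As $\widehat g$ is the ERM over $\F\ni h$, $\|\widehat g-Y\|\leq\|h-Y\|$. (iii) As $[\widehat g,h]\subseteq\star(\F,\widehat g)$, $\|\pred-Y\|^2\leq\|(1-\mu)\widehat g+\mu h-Y\|^2$ for every $\mu\in[0,1]$.

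\noindent\textbf{Scalar reduction, and the main obstacle.} Place $Y$ at the origin, let $\hat v$ be the unit vector along $\pred-Y$ with $\rho=\|\pred-Y\|$, use (i) to write $\widehat g-\pred=s$ with $s\perp\hat v$, and decompose $h=\alpha\hat v+w$ with $w\perp\hat v$. Then (ii) becomes $\alpha^2+\|w\|^2\geq\rho^2+\|s\|^2$, and (iii) becomes $\psi(\mu)\geq0$ on $[0,1]$ for an \emph{upward} parabola $\psi$ with $\psi(0)=\|s\|^2$ and $\psi(1)=\|h-Y\|^2-\rho^2$ (the quantity to be bounded below). Scaling to $\rho=1$, the goal is $\alpha^2+\|w\|^2-1\geq c\big((1-\alpha)^2+\|w\|^2\big)$; the case $\alpha\geq1$ is immediate with $c=1$, so put $\delta=1-\alpha>0$. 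A parabola nonnegative at $\mu=0,1$ can dip below $0$ inside $[0,1]$ only if its vertex lies in $(0,1)$, so (iii) splits into three sub-cases according to the signs of $\psi'(0)$ and $\psi'(1)$. In the two ``vertex outside $[0,1]$'' sub-cases an AM--GM step forces $\|w\|^2$ to be large enough ($\gtrsim\delta$) that the ratio is $\geq\tfrac12$. The remaining ``negative discriminant'' sub-case is binding: writing $\|s\|^2=a\delta$, $\|w\|^2=b\delta$ and optimizing over the correlation between $s$ and $w$, (ii) reads $b\geq a+2-\delta$ while the discriminant condition reads, in the relevant limit $\delta\to0$, $b\geq a+1+\tfrac1{4a}$; minimizing $\delta+b$ over this region gives $\delta+b\geq\tfrac94$, so $\big(\alpha^2+\|w\|^2-1\big)\big/\big((1-\alpha)^2+\|w\|^2\big)=1-\tfrac{2}{\delta+b}\geq\tfrac19$. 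Hence any $c\leq\tfrac19$ is admissible, in particular the stated $c=\tfrac1{18}$. The one real difficulty is that $\star(\F,\widehat g)$ is not convex, so $\pred$ obeys only the three partial optimality conditions (i)--(iii) --- one of them a one-parameter family of scalar inequalities --- instead of a single variational inequality; assembling them into the single quadratic bound, i.e.\ checking that no admissible configuration of $(Y,\widehat g,\tilde f,h,\pred)$ pushes the ratio below $\tfrac19$, is the crux, and the remainder is routine bookkeeping.
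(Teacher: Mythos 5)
Your proposal is correct, and it takes a genuinely different route from the paper's. The paper argues synthetically: it introduces the balls $\B_1,\B_2$ centered at $Y$, the cone $\C$ over $\B_2$ with apex $\widehat g$, locates $h$ outside $\mathrm{int}\,\C$ and $\mathrm{int}\,\B_1$ and $\pred$ among the contact points of $\C$ with the inner sphere, reduces to the plane through $(\widehat g, Y, h)$ by a symmetry argument over contact points, proves the bound with constant $1/9$ for the projection $h_\perp$ of $h$ onto the spherical cap via a cone-angle estimate plus the triangle inequality and Pythagoras, and finally transfers the bound from $h_\perp$ to $h$ at the cost of a factor $2$, landing at $c=1/18$. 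You instead extract the same geometric information as explicit variational conditions --- the exact orthogonality $\langle \pred-Y,\widehat g-\pred\rangle_n=0$ from interior optimality along the generating segment, ERM optimality of $\widehat g$, and nonnegativity of $\psi$ on $[\widehat g,h]$ --- and solve the resulting scalar constrained optimization. I verified the crux: the worst-case correlation is indeed $\langle s,w\rangle_n=\|s\|_n\|w\|_n$ (only $\psi$ on $(0,1)$ depends on it, monotonically), and with $\sigma^2=a\delta$, $\omega^2=b\delta$ the ERM constraint gives $b+\delta\geq a+2$ while the discriminant condition gives $b+\delta\geq \frac{1}{4a}+\frac{(2-\delta)^2a}{4}+1+\frac{\delta}{2}$; the combination is minimized at $a=1/4$, $\delta\to 0$, yielding $b+\delta\geq 9/4$ and ratio $\geq 1/9$ for all admissible $\delta$, not just in the limit. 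Two bookkeeping points you should make explicit when writing this up: squaring $2\sqrt{ab}\geq 1+(2-\delta)a$ is legitimate only when the right-hand side is nonnegative (otherwise that constraint is vacuous, which can occur only for $\delta>2+1/a$, where either $\delta\geq 9/4$ directly or $a>4$ and the ERM constraint finishes); and your ``vertex at or beyond $1$'' sub-case is in fact empty, since ERM optimality forces $\psi(1)\geq\psi(0)$ while strict convexity would put the vertex at $1/2$. What your route buys: it bypasses the paper's somewhat informal reduction to the plane and its lossy $h\mapsto h_\perp$ step, it isolates exactly which optimality properties of the two-step estimator are used, and it yields the better constant $c=1/9$ directly for arbitrary $h$ (so the stated $1/18$ holds a fortiori). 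What the paper's route buys is a shorter and more visual argument.
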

\begin{remark}
	In the absence of convexity of $\F$, the two-step estimator $\pred$ mimics the key Pythagorean identity, though with a constant $1/18$. We have not focused on optimizing $c$ but rather on presenting a clean geometric argument. 
\end{remark}
\begin{proof}[\textbf{Proof of Lemma~\ref{lem:angle_ineq}}]
	\begin{figure}[h]
		\centering
			\includegraphics[width=.3\textwidth]{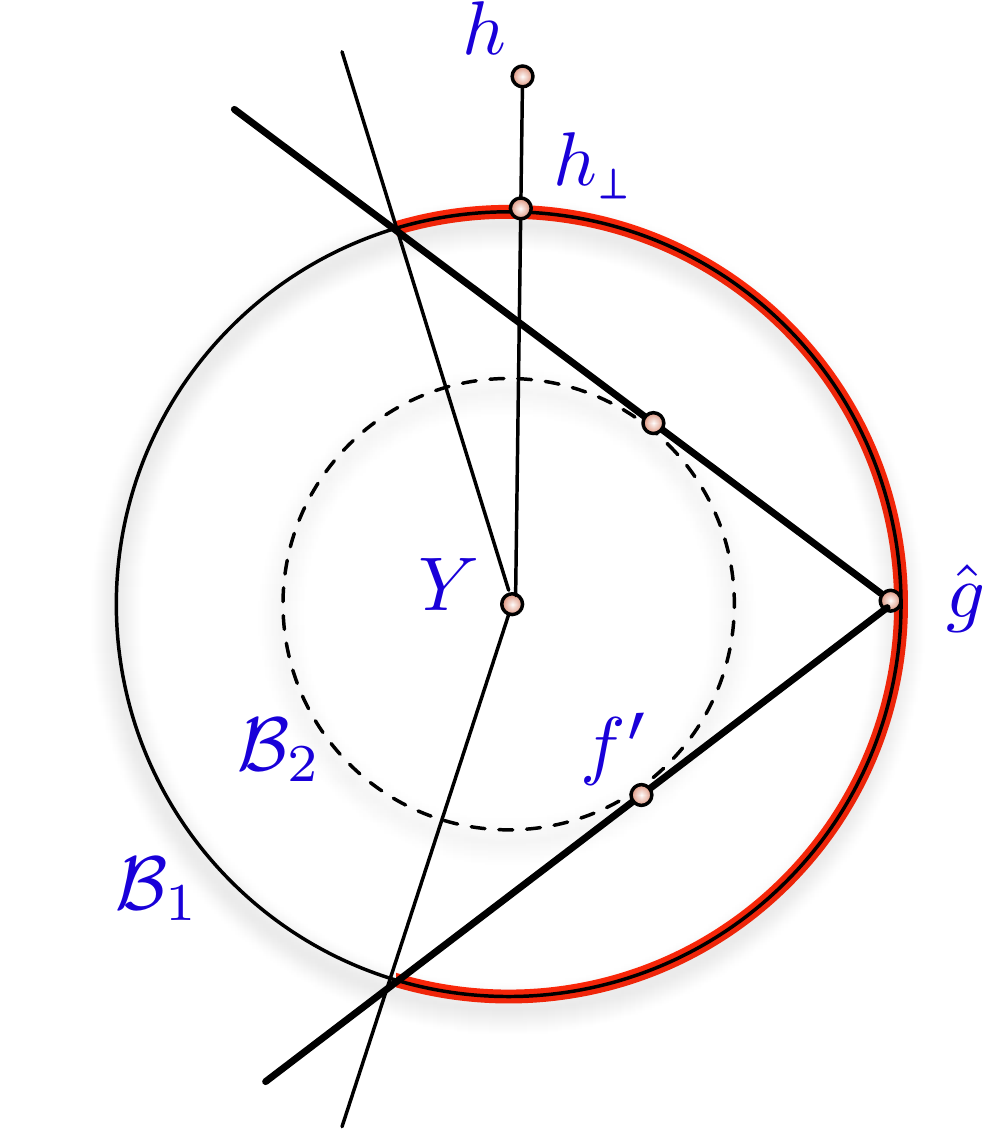}
		\label{fig:geometry}
	\end{figure}

Define the empirical $\ell_2$ distance to be, for any $f,g$, $\| f  \|_{n}:= [\emp f^2]^{1/2}$ and empirical product to be $\langle f, g\rangle_{n} := \emp [f g]$. We will slightly abuse the notation by identifying every function with its finite-dimensional projection on $(X_1,\ldots,X_n)$.

Denote the ball (and sphere) centered at $Y$ and with radius $\|\widehat{g}-Y\|_n$ to be $\B_{1}: = \B(Y, \| \widehat{g} - Y\|_{n})$ (and $\S_1$, correspondingly). In a similar manner, define $\B_2: = \B(Y, \| \pred - Y\|_{n})$ and $\S_2$. By the definition of the Star algorithm, we have $\B_{2} \subseteq \B_{1}$. The statement holds with $c=1$ if $\pred = \widehat{g}$, and so we may assume $\B_2\subset \B_1$. Denote by $\C$ the conic hull of $\B_2$ with origin at $\widehat{g}$. Define the spherical cap outside the cone $\C$ to be $\S = \S_1 \setminus \C$ (drawn in red in Figure~\ref{fig:geometry}).

First, by the optimality of $\widehat{g}$, for any $h \in \F$,  we have $\| h-Y\|_{n}^2 \geq  \|\widehat{g}-Y\|_{n}^2$, i.e. any $h\in \F$ is not in the interior of $\B_1$. Furthermore, $h$ is not in the interior of the cone $\C$, as otherwise there would be a point inside $\B_2$ strictly better than $\pred$. Thus $h \in (\text{int} \C)^c \cap (\text{int} \B_1)^c$.

Second, $\pred \in \B_2$ and it is a contact point of $\C$ and $\S_2$. Indeed, $\pred$ is necessarily on a line segment between $\hat{g}$ and a point outside $\B_1$ that does not pass through the interior of $\B_2$ by optimality of $\pred$. Let $K$ be the set of all contact points -- potential locations of $\pred$.

Now we fix $h\in\F$ and consider the two dimensional plane $\mathcal{L}$ that passes through three points $(\hat{g}, Y, h)$, depicted in Figure~\ref{fig:geometry}. Observe that the left-hand-side of the desired inequality \eqref{claim.angle} is constant as $\pred$ ranges over $K$. To prove the inequality it therefore suffices to choose a value $f'\in K$ that maximizes the right-hand-side. The maximization of $\|h-f'\|^2$ over $f'\in K$ is achieved by $f'\in K\cap \mathcal{L}$. This can be argued simply by symmetry: the two-dimensional plane $\mathcal{L}$ intersects ${\sf span}(K)$ in a line and the distance between $h$ and $K$ is maximized at the extreme point of this intersection. Hence, to prove the desired inequality, we can restrict our attention to the plane $\mathcal{L}$ and $f'$ instead of $\pred$. 

For any $h \in \F$, define the projection of $h$ onto the shell $\mathcal{L}\cap \S$ to be $h_{\perp} \in \S$. We first prove \eqref{claim.angle} for $h_{\perp}$ and then extend the statement to $h$. By the geometry of the cone, 
	$$
	\| f' - \widehat{g} \|_{n}  \geq \frac{1}{2} \| \widehat{g} - h_{\perp}\|_{n} .
	$$
	By triangle inequality,
	\begin{align*}
	\|f'- \widehat{g}\|_{n} \geq \frac{1}{2} \|\widehat{g} - h_{\perp}\|_{n} \geq \frac{1}{2} \left( \|f'- h_{\perp}\|_{n} - \|f' - \widehat{g} \|_{n} \right) .
	\end{align*}
	Rearranging,
	\begin{align*}
	\| f' - \widehat{g}\|_{n}^2 \geq \frac{1}{9} \|f' - h_{\perp}\|_{n}^2 .
	\end{align*}
	By the Pythagorean theorem, 
    $$
	\| h_{\perp} - Y\|_{n}^2 - \|f' - Y\|_{n}^2 = \|\widehat{g} - Y\|_{n}^2 - \|f' - Y\|_{n}^2  = \|f' - \widehat{g}\|_{n}^2 \geq \frac{1}{9} \|f' - h_{\perp}\|_{n}^2,
	$$
	thus proving the claim for $h_{\perp}$ for constant $c = 1/9$. 
	
	We can now extend the claim to $h$. Indeed, due to the fact that $h \in (\text{int} \C)^c \cap (\text{int} \B_1)^c$ and the geometry of the projection $h \rightarrow h_{\perp}$, we have $\langle h_{\perp} - Y, h_{\perp} - h \rangle_{n} \leq 0$. Thus
	\begin{align*}
	\| h - Y\|_{n}^2 - \|f'-Y \|_{n}^2 & = \|  h_{\perp} - h \|_{n}^2 + \| h_{\perp} - Y\|_{n}^2 -2\langle h_{\perp} - Y, h_{\perp} - h \rangle_{n} -\|f'-Y \|_{n}^2  \\
	& \geq  \|  h_{\perp} - h \|_{n}^2 + (\| h_{\perp} - Y\|_{n}^2-\|f'-Y \|_{n}^2) \\
	& \geq \|  h_{\perp} - h \|_{n}^2+ \frac{1}{9} \| f' - h_{\perp} \|_{n}^2 \geq \frac{1}{18} (\|  h_{\perp} - h \|_{n}+\|f' - h_{\perp} \|_{n})^2 \\
	& \geq \frac{1}{18} \| f'-h \|_{n}^2.
	\end{align*}
	This proves the claim for $h$ with constant $1/18$.

\end{proof}

An upper bound on excess loss follows immediately from Lemma~\ref{lem:angle_ineq}.
\begin{corollary}
	\label{cor:excess_loss_bound_deterministic}
Conditioned on the data $\{X_n, Y_n\}$, we have a deterministic upper bound for the Star algorithm:
	\begin{align}
		\label{eq:excess_loss_bound_deterministic}
		\excessloss(\pred) &\leq (\emp-\En)  [2(f^*-Y)(f^* - \pred)] + \En (f^* - \pred)^2 - (1+c) \cdot \emp(f^* - \pred)^2, 
	\end{align}
	with the value of constant $c$ given in Lemma~\ref{lem:angle_ineq}.
\end{corollary}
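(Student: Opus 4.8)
The plan is to instantiate the Geometric Inequality (Lemma~\ref{lem:angle_ineq}) at the single admissible point $h=f^*\in\F$ and then convert the resulting empirical statement into a bound on the population excess loss by an elementary bias/variance-style decomposition. Throughout I work conditionally on $\{(X_t,Y_t)\}_{t=1}^n$, so that $\pred$ is a fixed function and $\En$ acts only on a fresh independent copy of $(X,Y)$.

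First I would expand the excess loss. Writing $\pred-Y=(\pred-f^*)+(f^*-Y)$ and squaring, the term $\En(f^*-Y)^2$ cancels against $-\En(f^*-Y)^2$ in the definition of $\excessloss$, giving the identity
\[
\excessloss(\pred) \;=\; \En(f^*-\pred)^2 \;-\; 2\,\En\!\left[(f^*-Y)(f^*-\pred)\right].
\]
This uses only the definitions of $\excessloss$ and $f^*$; no optimality property of $f^*$ is needed here. Second, I would apply Lemma~\ref{lem:angle_ineq} with $h=f^*$, namely $\emp(f^*-Y)^2-\emp(\pred-Y)^2\geq c\,\emp(\pred-f^*)^2$, and expand its left-hand side exactly as above but with $\emp$ in place of $\En$. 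Rearranging yields the purely empirical inequality
\[
2\,\emp\!\left[(f^*-Y)(f^*-\pred)\right] \;\geq\; (1+c)\,\emp(f^*-\pred)^2 .
\]

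Finally I would stitch the two displays together. Splitting the population cross term via $-2\En[(f^*-Y)(f^*-\pred)] = -2\,\emp[(f^*-Y)(f^*-\pred)] + (\emp-\En)[2(f^*-Y)(f^*-\pred)]$ and substituting into the excess-loss identity gives
\[
\excessloss(\pred) = \En(f^*-\pred)^2 + (\emp-\En)\!\left[2(f^*-Y)(f^*-\pred)\right] - 2\,\emp\!\left[(f^*-Y)(f^*-\pred)\right],
\]
and then replacing $-2\,\emp[(f^*-Y)(f^*-\pred)]$ by its upper bound $-(1+c)\,\emp(f^*-\pred)^2$ from the empirical inequality yields exactly \eqref{eq:excess_loss_bound_deterministic}.

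There is no genuinely hard step: the content is entirely in recognizing that the correct instantiation of Lemma~\ref{lem:angle_ineq} is $h=f^*$, after which the argument is two lines of algebra. The only points requiring care are the conditioning convention (so that the algebraic expansions of $(\pred-Y)^2$ are valid for both $\En$ and $\emp$) and the consistent tracking of signs when moving between $\emp$, $\En$, and $\emp-\En$.
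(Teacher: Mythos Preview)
Your proof is correct and is essentially the same as the paper's: both instantiate Lemma~\ref{lem:angle_ineq} at $h=f^*$ and combine it with the algebraic identity $\En(\pred-Y)^2-\En(f^*-Y)^2=\En(f^*-\pred)^2-2\En[(f^*-Y)(f^*-\pred)]$ (and its empirical analogue). The only cosmetic difference is the order of operations---the paper adds the nonnegative bracket $\emp(f^*-Y)^2-\emp(\pred-Y)^2-c\,\emp(\pred-f^*)^2$ to $\excessloss(\pred)$ and then expands, whereas you expand both the population and empirical quadratics first and then combine.
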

\begin{proof}
	\begin{align*}
		\excessloss(\pred) & =  \En(\pred(X) - Y)^2 - \inf_{f \in \F} \En(f(X)-Y)^2 \\
		& \leq  \En(\pred - Y)^2 - \En(f^*-Y)^2 + \left[ \emp(f^* - Y)^2 - \emp(\pred - Y)^2 - c \cdot \emp(\pred - f^*)^2 \right] \\
		& = (\emp-\En)  [2(f^*-Y)(f^* - \pred)] + \En (f^* - \pred)^2 - (1+c) \cdot \emp(f^* - \pred)^2. 
	\end{align*}	
\end{proof}
An attentive reader will notice that the multiplier on the negative empirical quadratic term in \eqref{eq:excess_loss_bound_deterministic} is slightly larger than the one on the expected quadratic term. This is the starting point of the analysis that follows.

\section{Symmetrization}

We will now show that the discrepancy in the multiplier constant in \eqref{eq:excess_loss_bound_deterministic} leads to offset Rademacher complexity through rather elementary symmetrization inequalities. We perform this analysis both in expectation (for the case of bounded functions) and in high probability (for the general unbounded case). While the former result follows from the latter, the in-expectation statement for bounded functions requires no assumptions, in contrast to control of the tails. 

\begin{theorem}
	\label{thm:bounded_offset}
	Define the set $\H : = \F - f^* + \star(\F - \F) $. The following expectation bound on excess loss of the Star estimator holds:
	\begin{align*}
		\En\excessloss(\pred) \leq (2M + K(2+c)/2) \cdot \En \sup_{h \in \H}  \left\{ \frac{1}{n}\sum_{i=1}^n 2 \epsilon_i h(X_i) - c' h(X_i)^2 \right\}
	\end{align*}
	where $\epsilon_1,\ldots,\epsilon_n$ are independent Rademacher random variables, $c' = \min\{ \frac{c}{4M},\frac{c}{4K(2+c)}\}$, $K=\sup_{f } |f|_\infty$, and $M = \sup_{f} |Y-f |_\infty$ almost surely.
\end{theorem}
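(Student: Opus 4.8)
The plan is to feed the deterministic inequality of Corollary~\ref{cor:excess_loss_bound_deterministic} into a symmetrization argument that is arranged so that the \emph{population} quadratic term $\En(f^*-\pred)^2$ is eliminated while the \emph{empirical} quadratic term survives as the offset of a Rademacher process; as the excerpt already flags, this is possible precisely because the multiplier $1+c$ on $\emp(f^*-\pred)^2$ in \eqref{eq:excess_loss_bound_deterministic} exceeds the multiplier $1$ on $\En(f^*-\pred)^2$, and it is the surplus (only about $c/2$ of which we will be able to keep) that becomes the offset. First I would reduce to a fixed class: set $g := \pred - f^*$ and $\xi := Y - f^*$. Writing $\pred = (1-\lambda)f + \lambda\widehat g$ with $f,\widehat g\in\F$, $\lambda\in[0,1]$ gives $g = (f - f^*) + \lambda(\widehat g - f)\in(\F - f^*) + \star(\F-\F) = \H$. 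Substituting $f^*-Y=-\xi$, $f^*-\pred=-g$ into \eqref{eq:excess_loss_bound_deterministic}, bounding the data-dependent $g$ by a supremum over the fixed class $\H$, and taking expectations,
\[
\En\,\excessloss(\pred)\;\le\;\En\,\sup_{h\in\H}\Big\{(\emp-\En)\!\left[2\xi h\right] + \En h^2 - (1+c)\,\emp h^2\Big\}.
\]

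Next I would symmetrize in a way that preserves the empirical offset. Introduce an independent ghost sample $(X_i',Y_i')$ with $\xi_i' := Y_i' - f^*(X_i')$ and Rademacher signs $\epsilon_1,\dots,\epsilon_n$, and let $\psi_h(x,y) := 2(y-f^*(x))h(x) - h(x)^2$, so the bracket above equals $(\emp-\En)\psi_h - c\,\emp h^2$. Representing $\En\psi_h$ by the ghost average $\En'[\emp'\psi_h]$, pulling $\En'$ out of the supremum by Jensen, and symmetrizing by swapping $(X_i,Y_i)\leftrightarrow(X_i',Y_i')$ with sign $\epsilon_i$, the $i$-th summand becomes, after averaging the two swap outcomes,
\[
-\tfrac c2\big(h(X_i)^2 + h(X_i')^2\big)\;+\;\epsilon_i\Big(\psi_h(X_i,Y_i) - \psi_h(X_i',Y_i') - \tfrac c2\big(h(X_i)^2 - h(X_i')^2\big)\Big).
\]
Separating real and ghost coordinates, using subadditivity of the supremum, and using $\epsilon\eqdist-\epsilon$ to fold the ghost block onto the real block (after renaming $X_i'\mapsto X_i$, $\xi_i'\mapsto\xi_i$), I would arrive at
\[
\En\,\excessloss(\pred)\;\le\;2\,\En\,\sup_{h\in\H}\Big\{\frac1n\sum_{i=1}^n\epsilon_i\Big(2\xi_i h(X_i) - \big(1+\tfrac c2\big)h(X_i)^2\Big)\;-\;\tfrac c2\,\emp h^2\Big\}.
\]
The point is that no population quantity remains: $\En h^2$ has been absorbed by the swap identity, paid for with $1+\tfrac c2$ of the available multiplier $1+c$, leaving the genuinely empirical offset $-\tfrac c2\,\emp h^2$.

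Finally I would pass to the stated offset Rademacher complexity by a contraction step. On $\H$ every function satisfies $|h|_\infty\le 4K$ (since $|f|_\infty\le K$ on $\F$), so $z\mapsto 2\xi_i z - (1+\tfrac c2)z^2$ is Lipschitz on the relevant range with constant of order $M + K(2+c)$ and vanishes at $0$; applying the Ledoux--Talagrand comparison inequality in the form that tolerates an additional, $\epsilon$-independent summand $\Psi(h) = -\tfrac c2\,\emp h^2$ replaces this map inside the Rademacher sum by a constant multiple of $h(X_i)$ and only rescales the offset. Tracking the constants carefully — how the $2M$ piece coming from $|2\xi_i|\le 2M$ and the $K(2+c)$ piece coming from the quadratic term each inherit a share of the offset, and then merging the two resulting offset-Rademacher suprema, which is legitimate because that supremum is non-increasing in the offset constant, so the smaller offset governs — yields the asserted bound with leading constant $2M + K(2+c)/2$ and offset constant $c' = \min\{c/(4M),\,c/(4K(2+c))\}$, the ``$2$'' in $2\epsilon_i h(X_i)$ being the symmetrization factor.

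The hard part is the second step. A naive symmetrization of $(\En-\emp)h^2$ would leave an \emph{uncompensated} Rademacher average of $h^2$, which is not of smaller order; it is therefore essential to use the ghost-sample-plus-swap identity to eliminate $\En h^2$ and simultaneously retain a strictly empirical quadratic offset, and this is exactly what lets the in-expectation bound hold with no assumption on $\F$. The remaining difficulties are technical: running the contraction with the offset already present (which forces the ``$+\Psi(h)$'' version of the comparison inequality) and keeping the arithmetic of the constants aligned so that the two offsets collapse into the single $c'$ above.
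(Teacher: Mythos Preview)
Your approach is correct and runs along the same rails as the paper's --- symmetrize with a ghost sample so that the population quadratic disappears while an empirical quadratic offset survives, then contract away the $\xi_i$ and $h^2$ multipliers --- but the paper organizes the argument differently. Before symmetrizing, the paper splits the deterministic bound \eqref{eq:excess_loss_bound_deterministic} into two pieces:
\[
\underbrace{(\emp-\En)[2\xi h]-\tfrac{c}{4}\big(\emp h^{2}+\En h^{2}\big)}_{\text{cross term}}
\;+\;
\underbrace{(1+\tfrac{c}{4})\En h^{2}-(1+\tfrac{3c}{4})\emp h^{2}}_{\text{pure quadratic}},
\]
and treats each with its own symmetrization-plus-contraction. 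The cross term is handled exactly as you describe (Jensen on the ghost, swap, contract away $\xi_i$), producing the $2M$ contribution with offset $c/(4M)$. The pure quadratic term is handled by a standalone contraction lemma (the paper's Lemma~\ref{lem:contraction}), which is a from-scratch coordinate-by-coordinate argument and delivers the $K(2+c)/2$ contribution with offset $c/(4K(2+c))$; the two are then merged via the minimum, exactly as you anticipate.

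Your single bundled symmetrization of $\psi_h=2\xi h-h^{2}$ together with the asymmetric offset $-c\,\emp h^{2}$ is a nice way to do the first step in one move; the swap identity you write down is valid. The only caveat is constants: if you then contract the bundled map $z\mapsto 2\xi_i z-(1+\tfrac{c}{2})z^{2}$ (or even split it as you suggest), its Lipschitz constant on the relevant range $|z|\le 4K$ is of order $2M+4K(2+c)$, which feeds through to a larger leading constant and smaller offset than stated. Recovering the paper's precise constants seems to require separating the quadratic part before symmetrization, as the paper does.
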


The proof of the theorem involves an introduction of independent Rademacher random variables and  two contraction-style arguments to remove the multipliers $(Y_i-f^*(X_i))$. These algebraic manipulations are postponed to the appendix.

The term in the curly brackets will be called an offset Rademacher process, and the expected supremum --- an offset Rademacher complexity. While Theorem~\ref{thm:bounded_offset} only applies to bounded functions and bounded noise, the upper bound already captures the localization phenomenon, even for non-convex function classes (and thus goes well beyond the classical local Rademacher analysis). 

As argued in \citep{Mendelson14}, it is the contraction step that requires boundedness of the functions when analyzing square loss. Mendelson uses a small ball assumption (a weak condition on the distribution, stated below) to split the analysis into the study of the multiplier and quadratic terms. This assumption allows one to compare the expected square of any function to its empirical version, to within a multiplicative constant that depends on the small ball property. In contrast, we need a somewhat stronger assumption that will allow us to take this constant to be at least $1-c/4$. We phrase this  condition---the lower isometry bound---as follows.
\footnote{We thank Shahar Mendelson for pointing out that the small ball condition in the initial version of this paper was too weak for our purposes.}
\begin{definition}[Lower Isometry Bound]
\label{Assump:Low-Iso-Bd}
	We say that a function class $\F$ satisfies the lower isometry bound with some parameters $0<\eta<1$ and $0<\delta<1$ if
	\begin{align}
		\mbb{P}\left( \inf_{f \in \F \setminus \{ 0\}}\frac{1}{n} \sum_{i=1}^n \frac{f^2(X_i)}{\En f^2}  \geq 1 - \eta   \right) \geq 1 - \delta
	\end{align}
	for all $n \geq n_0(\F, \delta, \eta)$, where $n_0(\F, \delta, \eta)$ depends on the complexity of the class.
\end{definition}
In general this is a mild assumption that requires good tail behavior of functions in $\F$, yet it is stronger than the small ball property. Mendelson \cite{Mendelson15} shows that this condition holds for heavy-tailed classes assuming the small ball condition plus a norm-comparison  property $\| f \|_{\ell_q} \leq L \| f \|_{\ell_2}, \forall f \in \F $. We also remark that Assumption~\ref{Assump:Low-Iso-Bd} holds for sub-gaussian classes $\F$ using concentration tools, as already shown in \cite{lecue2013learning}. For completeness, let us also state the small ball property:
\begin{definition}[Small Ball Property \cite{Mendelson14,Mendelson14general}]
	The class of functions $\F$ satisfies the small-ball condition if there exist constants $\kappa>0$ and $0<\epsilon<1$ for every $f\in \F$,
	$$\mathbb{P}\big(|f(X)| \geq \kappa (\En f^2)^{1/2} \big) \geq \epsilon.$$
\end{definition}

Armed with the lower isometry bound, we now prove that the tail behavior of the deterministic upper bound in  \eqref{eq:excess_loss_bound_deterministic} can be controlled via the tail behavior of  offset Rademacher complexity.
\begin{theorem}
	\label{thm:unbounded-offset}
	Define the set $\H : = \F - f^* + \star(\F - \F) $. Assume the lower isometry bound in Definition~\ref{Assump:Low-Iso-Bd} holds with 
		$\eta = c/4$ and some $\delta<1$,
	 where $c$ is the constant in \eqref{claim.angle}. 
	Let $\xi_i = Y_i - f^*(X_i)$. Define $$
A := \sup_{h \in \H} \frac{\En h^4}{(\En h^2)^2} ~~~\text{and}~~~ B:= \sup_{X, Y} \En \xi^4.
$$
Then there exist two absolute constants $c', \tilde{c}>0$ (only depends on $c$), such that 
	 	\begin{align*}
		&\mbb{P}\left( \excessloss(\pred) > 4 u \right) \leq  4 \delta + 4 \mbb{P}\left( \sup_{h \in \H} \frac{1}{n} \sum_{i=1}^n \epsilon_i \xi_i h(X_i) - \tilde{c} \cdot  h(X_i)^2  > u  \right)
		\end{align*}
	for any $$u>\frac{32\sqrt{AB}}{c'} \cdot \frac{1}{n},$$ as long as
	$n > \frac{16 (1-c')^2 A}{c'^2} \vee n_0(\H,\delta,c/4)$.
\end{theorem}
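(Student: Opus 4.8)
The plan is to chain the deterministic bound of Corollary~\ref{cor:excess_loss_bound_deterministic} with the lower isometry bound and a one-sided (Mendelson-type) symmetrization, tracking how the fourth-moment parameters $A$ and $B$ enter. First, reduce to a supremum over the \emph{fixed} class $\H$: since $\pred=\lambda\widehat g+(1-\lambda)f_0$ for some $f_0\in\F$, $\lambda\in[0,1]$, we have $\pred-f^*=(\widehat g-f^*)+(1-\lambda)(f_0-\widehat g)\in(\F-f^*)+\star(\F-\F)=\H$, so Corollary~\ref{cor:excess_loss_bound_deterministic} (with $\xi=Y-f^*$) yields
\[
\excessloss(\pred)\le\sup_{h\in\H}\Big\{2(\emp-\En)[\xi h]+\En h^2-(1+c)\emp h^2\Big\}.
\]
Next, work on the lower isometry event $\mathcal A$ of Definition~\ref{Assump:Low-Iso-Bd} for $\H$ with $\eta=c/4$ (probability $\ge1-\delta$ once $n\ge n_0(\H,\delta,c/4)$), on which $(1-c/4)\En h^2\le\emp h^2$ for all $h\in\H$. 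Using this once (as $\En h^2\le(1-c/4)^{-1}\emp h^2$) collapses $\En h^2-(1+c)\emp h^2$ into $-\gamma\emp h^2$ with $\gamma=\tfrac{c(3-c)}{4-c}>0$; splitting $-\gamma\emp h^2$ in half and using the isometry the other way on one half also produces a genuine population penalty. Hence, on $\mathcal A$,
\[
\excessloss(\pred)\le Z:=\sup_{h\in\H}\Big\{2(\emp-\En)[\xi h]-a_1\En h^2-a_2\emp h^2\Big\},
\]
with $a_1,a_2>0$ depending only on $c$.

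To symmetrize, introduce an independent ghost sample $\{(X_i',Y_i')\}$, $\xi_i'=Y_i'-f^*(X_i')$. On $\{Z>4u\}\cap\mathcal A$ fix a near-maximizer $h_0$; conditionally on the original sample, Chebyshev's inequality controls the ghost fluctuations via the variance bounds $\mathrm{Var}(\emp'[\xi'h_0])\le\tfrac1n\En[\xi^2h_0^2]\le\tfrac{\sqrt{AB}}{n}\En h_0^2$ (Cauchy--Schwarz plus the definitions of $A$ and $B$) and $\mathrm{Var}(\emp'h_0^2)\le\tfrac1n\En h_0^4\le\tfrac An(\En h_0^2)^2$. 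Only \emph{one-sided} deviations are needed: after an AM--GM step on $a_1\En h_0^2+u$ that cancels $\En h_0^2$, the events $\{\emp'[\xi'h_0]-\En[\xi h_0]\le\tfrac12(a_1\En h_0^2+u)\}$ and $\{\emp'h_0^2\le\tfrac32\En h_0^2\}$ each fail with probability $<\tfrac14$ exactly under the theorem's hypotheses $u\gtrsim\sqrt{AB}/n$ and $n\gtrsim A$; on $\mathcal A$ one further has $\En h_0^2\le(1-c/4)^{-1}\emp h_0^2$, hence $\emp'h_0^2\le2\emp h_0^2$. Plugging the first event into $Z$ and using the second to rewrite $-a_2\emp h_0^2$ in the swap-symmetric form $-\tfrac{a_2}{4}(\emp h_0^2+\emp'h_0^2)$ gives, on the intersection,
\[
2\big(\emp[\xi h_0]-\emp'[\xi'h_0]\big)-\tfrac{a_2}{4}\big(\emp h_0^2+\emp'h_0^2\big)>3u ,
\]
so, by the standard symmetrization lemma (the ghost event has conditional probability $>\tfrac12$ uniformly over the value of $h_0$),
\[
\mbb P\big(\{Z>4u\}\cap\mathcal A\big)\le2\,\mbb P\Big(\sup_{h\in\H}\big\{2(\emp[\xi h]-\emp'[\xi'h])-\tfrac{a_2}{4}(\emp h^2+\emp'h^2)\big\}>3u\Big).
\]
Finally, swapping the $i$-th pair $(X_i,Y_i)\leftrightarrow(X_i',Y_i')$ leaves $\emp h^2+\emp'h^2$ unchanged and flips the sign of $\xi_ih(X_i)-\xi_i'h(X_i')$, so this last supremum has the same law as $\sup_h\{\tfrac2n\sum_i\epsilon_i(\xi_ih(X_i)-\xi_i'h(X_i'))-\tfrac{a_2}{4}(\emp h^2+\emp'h^2)\}$; splitting into the (i.i.d.) unprimed and primed halves costs another factor $2$, and rescaling by $\tfrac12$ produces the process $\tfrac1n\sum_i\epsilon_i\xi_ih(X_i)-\tilde c\,h(X_i)^2$ with $\tilde c=a_2/8$. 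Combined with $\mbb P(\excessloss(\pred)>4u)\le\delta+\mbb P(\{Z>4u\}\cap\mathcal A)$ and absorbing the universal constants into $c'$ and $\tilde c$, this is the asserted $4\delta+4\,\mbb P(\cdots)$ (we do not optimize the numerical factors).

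The main difficulty is exactly this symmetrization: because $\xi$ is unbounded we cannot contract it away as in Theorem~\ref{thm:bounded_offset}, so a negative quadratic term must be carried through. The population penalty $a_1\En h^2$ conjured from the isometry surplus exists precisely to be spent (together with the slack $u$) on a one-sided Chebyshev bound for the ghost multiplier average — this is the source of $\sqrt{AB}$ and of the condition $u\gtrsim\sqrt{AB}/n$ — whereas the quadratic-in-$\En h^2$ variance of $\emp h^2$ is what forces the separate condition $n\gtrsim A$. Coaxing the surviving empirical penalty into the swap-symmetric form $\emp h^2+\emp'h^2$ without weakening it beyond a constant factor, and keeping the failure-event accounting consistent, are the remaining fiddly points; everything else is routine.
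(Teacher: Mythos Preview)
Your argument is correct and uses the same three ingredients as the paper---the deterministic bound of Corollary~\ref{cor:excess_loss_bound_deterministic}, the lower isometry bound, and a Gin\'e--Zinn-type probabilistic symmetrization with Chebyshev controlling the ghost fluctuations through $A$ and $B$---but you apply them in a different order. The paper symmetrizes \emph{first}, introducing an asymmetric pair $U_{X_i,Y_i}(h)=2\xi_ih-\En[2\xi h]+\En h^2-(1+c)h^2$ and $V_{X_i,Y_i}(h)=2\xi_ih-\En[2\xi h]-\En h^2+(1-c')h^2$ so that $U-V$ already carries both empirical quadratics; it then invokes the isometry on \emph{both} the original and the ghost sample (this is the source of the $2\delta$, later doubled to $4\delta$) to convert $2\En h^2-(1+c)\emp h^2-(1-c')\emp' h^2$ into $-\tilde c(\emp h^2+\emp' h^2)$. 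Chebyshev is used only to lower-bound $\beta=\inf_h\Pr{\sum V_{X_i',Y_i'}(h)<x/2}$.

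You instead spend the isometry \emph{up front} on the original sample alone, manufacturing both a population penalty $-a_1\En h^2$ and an empirical one $-a_2\emp h^2$, and then run the symmetrization with the population penalty absorbing the ghost multiplier fluctuation and a separate Chebyshev step (plus the already-available isometry on the original sample) coaxing the ghost quadratic into swap-symmetric form. This buys you a single $\delta$ from isometry rather than two, at the price of a slightly more delicate bookkeeping for the quadratic term; the paper's ordering is more symmetric between the two copies but pays twice for isometry. Either route yields the stated bound up to the absolute constants $c',\tilde c$, and your identification of where $\sqrt{AB}/n$ and $A/n$ enter is exactly the same as the paper's.
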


Theorem~\ref{thm:unbounded-offset} states that excess loss is stochastically dominated by offset Rademacher complexity.  We remark that the requirement in $A,B$ holds under the mild moment conditions.

\begin{remark}
In certain cases, Definition~\ref{Assump:Low-Iso-Bd} can be shown to hold for $f\in \F \setminus r^* \B$ (rather than all $f\in\F$), for some critical radius $r^*$, as soon as $n \geq n_0(\F,\delta,\eta,r^*)$  (see \cite{Mendelson15}). In this case, the bound on the offset complexity is only affected additively by $(r^*)^2$.
\end{remark}

We postpone the proof of the Theorem to the appendix. In a nutshell, it extends the classical probabilistic symmetrization technique \citep{GinZin84,Men03fewnslt} to the non-zero-mean offset process under the investigation.

\section{Offset Rademacher Process: Chaining and Critical Radius}

Let us summarize the development so far. We have shown that excess loss of the Star estimator is upper bounded by the (data-dependent) offset Rademacher complexity, both in expectation and in high probability, under the appropriate assumptions. We claim that the necessary properties of the estimator are now captured by the offset complexity, and we are now squarely in the realm of empirical process theory. In particular, we may want to quantify rates of convergence under complexity assumptions on $\F$, such as covering numbers. In contrast to local Rademacher analyses where one would need to estimate the data-dependent fixed point of the critical radius in some way, the task is much easier for the offset complexity. To this end, we study the offset process with the tools of empirical process theory. 

\subsection{Chaining Bounds}
The first lemma describes the behavior of offset Rademacher process for a finite class. 
\begin{lemma}
	\label{eq:finite}
	Let $V\subset \reals^n$ be a finite set of vectors of cardinality $N$. Then for any $C>0$,
	\begin{align*}
		\En_\epsilon \max_{v\in V} \left[ \frac{1}{n}\sum_{i=1}^n \epsilon_i v_i - Cv_i^2 \right] \leq \frac{1}{2C} \frac{\log N}{n}.
	\end{align*}
	Furthermore, for any $\delta>0$, 
	\begin{align*}
		\Pr{\max_{v\in V} \left[ \frac{1}{n}\sum_{i=1}^n \epsilon_i v_i - Cv_i^2 \right] \geq \frac{1}{2C} \frac{\log  N+ \log 1/\delta}{n} } \leq \delta.
	\end{align*}
When the noise $\xi $ is unbounded, 
\begin{align*}
		&\En_\epsilon \max_{v \in V }  \left[ \frac{1}{n}\sum_{i=1}^n \epsilon_i \xi_i v_i - C v_i^2 \right]  \leq  M \cdot \frac{\log N }{n}, \\
		&\mbb{P}_{\epsilon} \left( \max_{v\in V} \left[ \frac{1}{n}\sum_{i=1}^n \epsilon_i \xi_i v_i - Cv_i^2 \right] \geq  M  \cdot \frac{\log  N+ \log 1/\delta}{n}  \right) \leq \delta,
	\end{align*}
	where 
	\begin{align}
	\label{eq:M}
	M : = \sup_{v \in V \setminus \{0\}} \frac{ \sum_{i=1}^n v_i^2 \xi_i^2}{2C \sum_{i=1}^n v_i^2}.
	\end{align}
\end{lemma}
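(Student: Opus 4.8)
The plan is to derive all four inequalities from a single Chernoff-type computation, using the crucial feature that the negative quadratic term $-Cv_i^2$ lets us push the free parameter $\lambda$ in the exponential-moment bound all the way to the threshold at which the Rademacher moment generating function is exactly compensated. The two bounded-noise statements are the special case of the unbounded ones with $\xi_i\equiv 1$, but it is cleanest to present the bounded case first.

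First I would handle the bounded-noise expectation bound. Fix $\lambda>0$ and $v\in V$. Because the $\epsilon_i$ are i.i.d.\ Rademacher, $\En_\epsilon e^{\lambda\epsilon_i v_i}=\cosh(\lambda v_i)\le e^{\lambda^2 v_i^2/2}$, and hence by independence
$$\En_\epsilon \exp\Big( \lambda \sum_{i=1}^n (\epsilon_i v_i - C v_i^2) \Big) \le \exp\Big( \sum_{i=1}^n v_i^2 \big( \tfrac{\lambda^2}{2} - \lambda C \big) \Big).$$
Taking $\lambda=2C$ makes the right-hand exponent vanish, so each term is at most $1$. Combining Jensen's inequality (convexity of $\exp$) with a union bound over $V$ then gives
$$\exp\Big( 2C\, \En_\epsilon \max_{v \in V} \sum_{i=1}^n (\epsilon_i v_i - C v_i^2) \Big) \le \sum_{v \in V} \En_\epsilon \exp\Big( 2C \sum_{i=1}^n (\epsilon_i v_i - C v_i^2) \Big) \le N,$$
and taking logarithms and dividing by $2Cn$ yields the first claim. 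For the high-probability version I would combine Markov's inequality with the same union bound, obtaining $\Pr{\max_{v \in V} \sum_{i=1}^n (\epsilon_i v_i - C v_i^2) \ge t} \le N e^{-2Ct}$; setting the right-hand side equal to $\delta$ gives $t = (\log N + \log 1/\delta)/(2C)$, which is the stated tail bound after dividing by $n$.

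For the unbounded-noise statements I would repeat the argument verbatim with $\epsilon_i v_i$ replaced by $\epsilon_i\xi_i v_i$. Now $\En_\epsilon e^{\lambda\epsilon_i\xi_i v_i}\le e^{\lambda^2\xi_i^2 v_i^2/2}$, so
$$\En_\epsilon \exp\Big( \lambda \sum_{i=1}^n (\epsilon_i \xi_i v_i - C v_i^2) \Big) \le \exp\Big( \tfrac{\lambda^2}{2} \sum_{i=1}^n \xi_i^2 v_i^2 - \lambda C \sum_{i=1}^n v_i^2 \Big),$$
and the exponent is nonpositive precisely when $\lambda\le 2C\sum_i v_i^2/\sum_i\xi_i^2 v_i^2$. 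The largest $\lambda$ that works simultaneously for every $v\in V\setminus\{0\}$ is $\lambda=1/M$ with $M$ the quantity in \eqref{eq:M}. With this choice each term in the union bound is again at most $1$, and the same Jensen/Markov steps as above (with $2C$ replaced by $1/M$) give $\En_\epsilon \max_{v\in V}[\ldots] \le M\log N/n$ and the corresponding tail bound $M(\log N + \log 1/\delta)/n$.

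There is no serious obstacle here — it is a textbook sub-Gaussian moment generating function estimate plus a union bound. The only point that needs care is the choice of $\lambda$ in the unbounded case: the critical value at which the quadratic term compensates the exponential moment depends on $v$ through the ratio $\sum_i\xi_i^2 v_i^2/\sum_i v_i^2$, so one must take the worst case over $V$, and this worst case is exactly $M$ as defined in \eqref{eq:M}. (Specializing to $\xi_i\equiv 1$ gives $M=1/(2C)$ and recovers the first two inequalities, so strictly speaking only the unbounded computation is needed.)
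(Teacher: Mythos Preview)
Your proof is correct and follows essentially the same route as the paper: a Chernoff/MGF bound using $\cosh(t)\le e^{t^2/2}$, a union bound over $V$, and the choice $\lambda=2C$ (respectively $\lambda=1/M$) to make the exponent vanish. The paper phrases the expectation step as $\frac{1}{\lambda}\log\sum_{v}\En_\epsilon e^{\lambda(\cdot)}$ rather than Jensen-then-sum, but this is the same computation.
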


Armed with the lemma for a finite collection, we upper bound the offset Rademacher complexity of a general class through the chaining technique. We perform the analysis in expectation and in probability. Recall that a $\delta$-cover of a subset $S$ in a metric space $(T,d)$ is a collection of elements such that the union of the $\delta$-balls with centers at the elements contains $S$. A covering number at scale $\delta$ is the size of the minimal $\delta$-cover. 

	One of the main objectives of symmetrization is to arrive at a stochastic process that can be studied conditionally on data, so that all the relevant complexities can be made sample-based (or, empirical). Since the functions only enter offset Rademacher complexity through their values on the sample $X_1,\ldots,X_n$, we are left with a finite-dimensional object. Throughout the paper, we work with the empirical $\ell_2$ distance	
	$$d_n(f,g) = \left( \frac{1}{n}\sum_{i=1}^n (f(X_i)-g(X_i))^2\right)^{1/2}.$$
	The covering number of $\G$ at scale $\delta$ with respect to $d_n$ will be denoted by $\cN_2(\G,\delta)$. 
	
\begin{lemma}
	\label{lem:offset_estimate_chaining}
	Let $\G$ be a class of functions from $\Z$ to ~$\reals$. Then for any $z_1,\ldots,z_n \in \Z$
	\begin{align*}
		\En_\epsilon \sup_{g\in\G} \left[ \frac{1}{n}\sum_{t=1}^n \epsilon_i g(z_i) - Cg(z_i)^2 \right] &\leq \inf_{\gamma\geq 0, \alpha\in [0,\gamma]} \left\{
			\frac{(2/C) \log \cN_2(\G, \gamma)}{n} \right.\\
			&\left.\hspace{1.5in} + 4\alpha + \frac{12}{\sqrt{n}}\int_{\alpha}^\gamma \sqrt{\log\cN_2(\G, \delta)}d\delta
		\right\}
	\end{align*}
	where $\cN_2(\G,\gamma)$ is an $\ell_2$-cover of $\G$ on $(z_1,\ldots,z_n)$ at scale $\gamma$ (assumed to contain $\bf{0}$).
\end{lemma}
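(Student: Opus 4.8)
The plan is to run a Dudley-type chaining argument adapted to the offset process, using the finite-class estimate of Lemma~\ref{eq:finite} at the coarsest scale and the ordinary (Massart) finite-class Rademacher bound on the chaining links, and optimizing the two cutoff scales at the end. Since every $g\in\G$ enters the process only through the vector $(g(z_1),\ldots,g(z_n))$, I would first identify $\G$ with a subset of $\reals^n$ equipped with the empirical metric $d_n$, fix $\gamma\ge 0$ and $\alpha\in[0,\gamma]$, set $\gamma_j=2^{-j}\gamma$, and take a minimal $d_n$-cover $T_j$ of $\G$ at scale $\gamma_j$ (so $|T_j|=\cN_2(\G,\gamma_j)$ and, by hypothesis, $\mathbf 0\in T_0$). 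For $g\in\G$ let $g_j\in T_j$ be a nearest element, so $d_n(g,g_j)\le\gamma_j$ and $d_n(g_{j+1},g_j)\le\tfrac32\gamma_j$; stopping at the last level $J$ with $\gamma_J>\alpha$, one telescopes $g=g_0+\sum_{j=0}^{J-1}(g_{j+1}-g_j)+(g-g_J)$ with $d_n(g,g_J)=O(\alpha)$.

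The one genuinely non-routine point is that the linear term $\tfrac1n\sum_i\epsilon_i g(z_i)$ splits additively along the chain but the quadratic compensator $-\tfrac Cn\sum_i g(z_i)^2$ does not. The idea is to transfer the whole compensator onto the single coarsest point $g_0$ via an elementary pointwise inequality of the type $a^2\ge\tfrac14 b^2-\tfrac13(a-b)^2$ (which is just $(3a+(b-a))^2\ge 0$), applied with $a=g(z_i)$, $b=g_0(z_i)$: this gives $-\tfrac Cn\sum_i g(z_i)^2\le-\tfrac{C}{4n}\sum_i g_0(z_i)^2+\tfrac C3\, d_n(g,g_0)^2\le-\tfrac{C}{4n}\sum_i g_0(z_i)^2+\tfrac C3\gamma^2$. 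The coefficient $\tfrac14$ is dictated precisely by the need to produce the factor $2/C$ in the final bound (through Lemma~\ref{eq:finite} applied with offset constant $C/4$); the residual $O(C\gamma^2)$ is of the same order as the leading term once $\gamma$ is at most the critical radius, which is the regime in which the lemma is used.

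Combining the telescoping with this offset transfer, for each $g$ one has $\tfrac1n\sum_i\epsilon_i g(z_i)-Cg(z_i)^2\le\bigl[\tfrac1n\sum_i\epsilon_i g_0(z_i)-\tfrac C4 g_0(z_i)^2\bigr]+\sum_{j=0}^{J-1}\tfrac1n\sum_i\epsilon_i(g_{j+1}-g_j)(z_i)+\tfrac1n\sum_i\epsilon_i(g-g_J)(z_i)+O(C\gamma^2)$. Taking $\sup_{g\in\G}$ and then $\En_\epsilon$: the first bracket is at most $\En_\epsilon\max_{v\in T_0}\bigl[\tfrac1n\sum_i\epsilon_i v_i-\tfrac C4 v_i^2\bigr]\le\tfrac{2}{C}\tfrac{\log\cN_2(\G,\gamma)}{n}$ by Lemma~\ref{eq:finite}; for each link, $\{g_{j+1}-g_j:g\in\G\}$ is a finite set of at most $\cN_2(\G,\gamma_{j+1})^2$ vectors of $d_n$-radius at most $3\gamma_{j+1}$, so Massart's finite-class bound gives at most $\tfrac{6\gamma_{j+1}}{\sqrt n}\sqrt{\log\cN_2(\G,\gamma_{j+1})}$, which by monotonicity of $\delta\mapsto\cN_2(\G,\delta)$ is at most $\tfrac{12}{\sqrt n}\int_{\gamma_{j+2}}^{\gamma_{j+1}}\sqrt{\log\cN_2(\G,\delta)}\,d\delta$, and summing over $j$ telescopes to $\tfrac{12}{\sqrt n}\int_{\alpha}^{\gamma}\sqrt{\log\cN_2(\G,\delta)}\,d\delta$; finally the tail term is at most $\sup_g d_n(g,g_J)=O(\alpha)$ by Cauchy--Schwarz (using $\sum_i\epsilon_i^2=n$). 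Taking the infimum over $\gamma$ and $\alpha$ yields the statement, with the constants $12$ and $4\alpha$ carrying the usual slack from the endpoints of the geometric chain.

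The main obstacle is exactly the treatment of the quadratic compensator: because $-Cg^2$ is not additive over the chain (and cannot be bounded below by any positive combination of $g_0^2$ and $(g-g_0)^2$, as the case $g\equiv 0$ shows), it has to be pushed to a single scale at the price of an unavoidable residual, and the transfer constant must be chosen so as to give simultaneously the sharp $2/C$ coefficient and a lower-order residual. A secondary bookkeeping point to check is that only the in-expectation half of Lemma~\ref{eq:finite} is needed here, and that the radius-free nature of that estimate (its dependence on $\log|T_0|$ only, not on the magnitudes of the $v\in T_0$) is what makes the localized rate come out despite $T_0$ possibly containing functions of large empirical norm.
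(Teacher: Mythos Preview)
Your overall architecture matches the paper's: push the quadratic compensator onto the coarsest approximant $g_0$ with coefficient $C/4$, bound that piece by Lemma~\ref{eq:finite}, and chain the linear remainder from scale $\gamma$ down to $\alpha$. The difference is entirely in the transfer step, and there it matters.

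You use the pointwise inequality $a^2\ge \tfrac14 b^2-\tfrac13(a-b)^2$ (which, incidentally, is $(4a-b)^2\ge 0$, not $(3a+(b-a))^2\ge 0$) and sum over $i$, producing a residual $\tfrac{C}{3}\,d_n(g,g_0)^2\le \tfrac{C}{3}\gamma^2$. This extra $O(C\gamma^2)$ term is simply not in the statement of the lemma, and your argument that it is ``lower order once $\gamma$ is at most the critical radius'' does not make it disappear: the lemma is stated for all $\gamma\ge 0$, and at the optimal $\gamma$ the term $C\gamma^2$ is typically of the \emph{same} order as the chaining integral, not smaller. So as written you prove a weaker inequality than the one claimed.

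The paper avoids the residual by working at the norm level rather than pointwise. The observation is this: if $\|g\|_n\le\gamma$, choose $v[g]=\mathbf 0$ (this is exactly where the hypothesis that $\mathbf 0$ lies in the cover is used, a point your write-up never invokes); if $\|g\|_n>\gamma$, the triangle inequality gives $\|v[g]\|_n\le\|v[g]-g\|_n+\|g\|_n\le\gamma+\|g\|_n\le 2\|g\|_n$. In either case $\tfrac14\|v[g]\|_n^2\le\|g\|_n^2$, i.e.
\[
\sup_{g\in\G}\Bigl[\tfrac{C}{4}\tfrac1n\sum_i v[g]_i^2-\tfrac{C}{n}\sum_i g(z_i)^2\Bigr]\le 0,
\]
with no leftover. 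The rest of your chaining (links via Massart, tail via Cauchy--Schwarz, geometric-to-integral conversion) is the same as the paper's ``standard chaining with diameter $\gamma$'' and is fine.

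So the fix is one line: replace your pointwise transfer by the norm-level case split above, and drop the $O(C\gamma^2)$ residual altogether.
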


Instead of assuming that $\bf{0}$ is contained in the cover, we may simply increase the size of the cover by $1$, which can be absorbed by a small change of a constant.

Let us discuss the upper bound of Lemma~\ref{lem:offset_estimate_chaining}. First, we may take $\alpha=0$, unless the integral diverges (which happens for very large classes with entropy growth of $\log \cN_2(\G,\delta) \sim \delta^{-p}$, $p\geq 2$). Next, observe that first term is precisely the rate of aggregation with a finite collection of size $\cN_2(\G,\gamma)$. Hence, the upper bound is an optimal balance of the following procedure: cover the set at scale $\gamma$ and pay the rate of aggregation for this finite collection, plus pay the rate of convergence of ERM within a $\gamma$-ball. The optimal balance is given by some $\gamma$ (and can be easily computed under assumptions on covering number behavior --- see \citep{RakSri14nonparam}). The optimal $\gamma$ quantifies the localization radius that arises from the curvature of the loss function. One may also view the optimal balance as the well-known equation
$$\frac{\log \cN(\G,\gamma)}{n} \asymp \gamma^2,$$
studied in statistics \citep{yang1999information} for well-specified models. The present paper, as well as \citep{RakSriTsy15}, extend the analysis of this balance to the misspecified case and non-convex classes of functions.

Now we provide a high probability analogue of Lemma \ref{lem:offset_estimate_chaining}.
\begin{lemma}
\label{lma:prob-chaining}
Let $\G$ be a class of functions from $\Z$ to ~$\reals$. Then for any $z_1,\ldots,z_n \in \Z$ and any $u>0$,
\small{
	\begin{align*}
		& \quad \mbb{P}_\epsilon \left( \sup_{g\in\G} \left[ \frac{1}{n}\sum_{t=1}^n \epsilon_i g(z_i) - Cg(z_i)^2 \right] > u \cdot \inf_{\alpha \in [0,\gamma]} \left\{ 4\alpha + \frac{12}{\sqrt{n}} \int_{\alpha}^{\gamma} \sqrt{\log \mc{N}_2(\G,\delta)} d\delta \right\}  +  \frac{2}{C} \frac{\log \mc{N}_2(\G,\gamma) + u}{n}  \right)\\
		& \leq \frac{2}{1-e^{-2}} \exp(-cu^2) + \exp(-u)
	\end{align*}}
	where $\cN_2(\G,\gamma)$ is an $\ell_2$-cover of $\G$ on $(z_1,\ldots,z_n)$ at scale $\gamma$ (assumed to contain $\bf{0}$) and $C,c>0$ are universal constants.
\end{lemma}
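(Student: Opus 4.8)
The plan is to run Dudley-type chaining on the offset process conditionally on $z_1,\dots,z_n$, using the negative quadratic term only as a compensator at the coarsest scale. Fix the geometric sequence of scales $\gamma=\gamma_0>\gamma_1>\cdots$ with $\gamma_j=2^{-j}\gamma$, and for each $j$ let $\pi_j$ send a function to a nearest element, in the empirical metric $d_n$, of a minimal $\gamma_j$-cover of $\G$; the coarsest cover has cardinality $\cN_2(\G,\gamma)$ and contains $\mathbf{0}$. Since $d_n(\pi_j(g),g)\le\gamma_j\to0$ on the sample, the telescoping identity $g=\pi_0(g)+\sum_{j\ge1}(\pi_j(g)-\pi_{j-1}(g))$ holds, so that
\begin{align*}
  \frac1n\sum_{i=1}^n\epsilon_i g(z_i)-Cg(z_i)^2 \;\le\; \Bigl[\frac1n\sum_{i=1}^n\epsilon_i\pi_0(g)(z_i)-\frac{C}{4}\pi_0(g)(z_i)^2\Bigr] + \sum_{j\ge1}\frac1n\sum_{i=1}^n\epsilon_i\bigl(\pi_j(g)-\pi_{j-1}(g)\bigr)(z_i),
\end{align*}
where to pass from $-C\|g\|_n^2$ to $-\tfrac{C}{4}\|\pi_0(g)\|_n^2$ one uses $\|\pi_0(g)\|_n\le\|g\|_n+\gamma$ (valid whenever $\|\pi_0(g)\|_n\ge2\gamma$, which one isolates by a short case split; in the complementary regime the whole process is $O(\gamma)$ by Cauchy--Schwarz against $\|\epsilon\|_n=1$ and is absorbed below). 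This is exactly the offset-chaining bookkeeping used in the in-expectation Lemma~\ref{lem:offset_estimate_chaining}.

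For the first, compensated, term we apply the high-probability part of Lemma~\ref{eq:finite} to the $\cN_2(\G,\gamma)$-point set at scale $\gamma$ with offset constant $C/4$ and confidence $e^{-u}$: with probability at least $1-e^{-u}$ it is at most $\tfrac{2}{C}\cdot\tfrac{\log\cN_2(\G,\gamma)+u}{n}$, which is the second summand in the claimed bound. For the chaining remainder we work level by level. At level $j$ there are at most $\cN_2(\G,\gamma_{j-1})\cdot\cN_2(\G,\gamma_j)\le\cN_2(\G,\gamma_j)^2$ distinct increment vectors $v=\pi_j(g)-\pi_{j-1}(g)$, each with $\|v\|_n\le\gamma_{j-1}+\gamma_j$; for fixed $v$, $\tfrac1n\sum_i\epsilon_i v_i$ is an average of independent bounded symmetric terms, hence sub-Gaussian with variance proxy $\|v\|_n^2/n$. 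A union bound over the $\cN_2(\G,\gamma_j)^2$ increments at level $j$, with the per-level confidence budget allocated a geometric fraction ($\propto 2^{-j}$) of the total and the threshold scaled by $u$, shows that with probability at least $1-\tfrac{2}{1-e^{-2}}e^{-cu^2}$, simultaneously over all $g$ and all $j\ge1$, the level-$j$ increment is at most $O\!\bigl(u\cdot\gamma_{j-1}\sqrt{\log\cN_2(\G,\gamma_j)/n}\bigr)$ plus a summable geometric remainder of order $u\gamma/\sqrt n$. Summing over $j\ge1$ and comparing the series to the entropy integral by the usual Riemann-sum estimate yields a bound of $u\cdot\tfrac{12}{\sqrt n}\int_0^\gamma\sqrt{\log\cN_2(\G,\delta)}\,d\delta$ on the chaining remainder; truncating the chain at the first level with $\gamma_j\le\alpha$ and controlling the (infinitely many, $d_n$-small) tail increments crudely by Cauchy--Schwarz produces the additive $4\alpha$ term and the lower integration limit, and licenses the infimum over $\alpha\in[0,\gamma]$.

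Finally, a union bound over the two failure events gives total probability $\tfrac{2}{1-e^{-2}}e^{-cu^2}+e^{-u}$, and adding the two deterministic estimates gives the stated inequality. I expect the main obstacle to be the bookkeeping in the first paragraph: one must check that a genuine constant fraction of the quadratic compensation survives the replacement of $g$ by its coarse approximant $\pi_0(g)$ (so that Lemma~\ref{eq:finite} applies with a fixed offset constant) while the approximation residual---which a priori involves $\|\pi_0(g)\|_n$, possibly of the order of the diameter of $\G$---is genuinely of lower order and absorbable into the $4\alpha$/Dudley budget; the case split on whether $\|\pi_0(g)\|_n$ exceeds a fixed multiple of $\gamma$ is what makes this clean. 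A secondary point requiring care is threading the parameter $u$ through the per-level union bounds so that it emerges as a clean multiplicative factor in front of the Dudley integral while producing the sub-Gaussian tail $e^{-cu^2}$; the rest is the routine combination of Hoeffding tail bounds with the chaining union bound, mirroring the argument for Lemma~\ref{lem:offset_estimate_chaining}.
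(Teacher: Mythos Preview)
Your proposal is correct and follows essentially the same route as the paper: decompose the offset process into the compensated coarse-level piece $\frac{1}{n}\sum_i \epsilon_i v[g]_i-(C/4)v[g]_i^2$ (handled by the high-probability part of Lemma~\ref{eq:finite}) and the chaining remainder $\frac{1}{n}\sum_i \epsilon_i(g(z_i)-v[g]_i)$ (handled by the standard high-probability Dudley argument), then union-bound the two failure events. The paper's proof is terser---it simply invokes the ``traditional high probability chaining bound'' for the remainder and Lemma~\ref{eq:finite} for the coarse level, relying on the norm-comparison step $\tfrac14\|v[g]\|_n^2\le\|g\|_n^2$ already established in the proof of Lemma~\ref{lem:offset_estimate_chaining}---but your more explicit per-level bookkeeping and case split recover exactly that argument.
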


The above lemmas study the behavior of offset Rademacher complexity for abstract classes $\G$. Observe that the upper bounds in previous sections are in terms of the class $\F-f^*+\star(\F-\F)$. This class, however, is not more complex that the original class $\F$ (with the exception of a finite class $\F$). More precisely, the covering numbers of $\F + \F' : = \{ f+g: f \in \F, g \in \F' \}$ and $\F - \F' := \{ f -g:  f \in \F , g \in \F' \}$ are bounded as
\begin{align*}
\log \mc{N}_{2}(\F+\F', 2 \epsilon), ~ \log \mc{N}_{2}(\F-\F', 2 \epsilon) \leq \log \mc{N}_2(\F, \epsilon) + \log \mc{N}_2(\F', \epsilon)
\end{align*}
for any $\F,\F'$. The following lemma shows that the complexity of the star hull $\star(\F)$ is also  not significantly larger than that of $\F$. 
\begin{lemma}[\cite{mendelson2002improving}, Lemma 4.5]
\label{lma:star-cov}
	For any scale $\epsilon>0$, the covering number of $\F \subset \B_2$ and that of  $\star(\F)$ are bounded in the sense
	\begin{align*}
		\log \mc{N}_2(\F,2\epsilon) \leq \log \mc{N}_2(\star(\F),2\epsilon) \leq \log \frac{2}{\epsilon} + \log \mc{N}_2(\F,\epsilon).
	\end{align*}
\end{lemma}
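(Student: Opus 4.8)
The plan is to prove the two inequalities separately. For the lower bound $\log \mc{N}_2(\F,2\epsilon) \leq \log \mc{N}_2(\star(\F),2\epsilon)$, I would simply note that $\F \subseteq \star(\F)$ (take $\lambda = 0$), and the covering number is monotone under inclusion: any $2\epsilon$-cover of the larger set $\star(\F)$ restricted appropriately still covers $\F$ — more carefully, a $2\epsilon$-cover $\{g_1,\dots,g_N\}$ of $\star(\F)$ has the property that every $f \in \F$ lies within $2\epsilon$ of some $g_j$, which is exactly the defining property of a $2\epsilon$-cover of $\F$ (we do not even need the centers to lie in $\F$, since the definition of covering number recalled in the paper allows arbitrary centers in the ambient metric space). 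So $\mc{N}_2(\F,2\epsilon) \le \mc{N}_2(\star(\F),2\epsilon)$.

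The substance is the upper bound. Fix an element $f_0 \in \F$; I will build a cover of $\star(\F) = \{\lambda f + (1-\lambda) f_0' : \ldots\}$ — actually $\star(\F)$ as defined here is the star hull around the origin, $\{\lambda f : f \in \F, \lambda \in [0,1]\}$, so a generic point is $\lambda f$ with $f \in \F \subseteq \B_2$ (the unit ball) and $\lambda \in [0,1]$. The idea is a product cover: discretize the radial parameter $\lambda$ and, at each scale, use the cover of $\F$. First I would take an $\epsilon$-cover $\{f_1,\dots,f_m\}$ of $\F$ in $d_n$, with $m = \mc{N}_2(\F,\epsilon)$. For the radial direction, partition $[0,1]$ into intervals; the key observation is that for $f \in \B_2$, the map $\lambda \mapsto \lambda f$ is $1$-Lipschitz in $\lambda$ (since $d_n(\lambda f, \lambda' f) = |\lambda - \lambda'|\, d_n(f,0) \le |\lambda-\lambda'|$), so an $\epsilon$-net of $[0,1]$ of size $\lceil 1/\epsilon \rceil$ suffices for that component. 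Then given arbitrary $\lambda f \in \star(\F)$, pick $f_j$ with $d_n(f,f_j) \le \epsilon$ and $\lambda_k$ with $|\lambda - \lambda_k| \le \epsilon$, and bound
\[
d_n(\lambda f, \lambda_k f_j) \le d_n(\lambda f, \lambda f_j) + d_n(\lambda f_j, \lambda_k f_j) \le \lambda \epsilon + |\lambda - \lambda_k|\, d_n(f_j,0) \le \epsilon + \epsilon = 2\epsilon,
\]
using $\lambda \le 1$ and $d_n(f_j,0) \le 1$. Hence the products $\{\lambda_k f_j\}$ form a $2\epsilon$-cover of $\star(\F)$ of cardinality at most $\lceil 1/\epsilon\rceil \cdot \mc{N}_2(\F,\epsilon) \le (2/\epsilon)\,\mc{N}_2(\F,\epsilon)$ (absorbing the ceiling into the factor $2/\epsilon$, valid for $\epsilon \le 1$, which is the only interesting range since $\F \subseteq \B_2$). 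Taking logarithms gives $\log \mc{N}_2(\star(\F),2\epsilon) \le \log(2/\epsilon) + \log \mc{N}_2(\F,\epsilon)$.

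The main thing to be careful about — the only real "obstacle" — is the bookkeeping of scales: one must make sure that covering $\F$ at scale $\epsilon$ (not $2\epsilon$) is what feeds into producing a $2\epsilon$-cover of $\star(\F)$, since the two triangle-inequality contributions ($\lambda\epsilon$ from approximating $f$ by $f_j$, and the radial $\epsilon$) each cost $\epsilon$. A secondary point is handling the ceiling $\lceil 1/\epsilon \rceil$ cleanly and the edge cases $\epsilon \ge 1$ (where $\star(\F) \subseteq \B_2$ is covered by a single ball, so the bound is trivial) and $\lambda$ near $0$ (covered since $0$ can be taken among the $\lambda_k$, or handled by the Lipschitz estimate directly). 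Since this is exactly Lemma 4.5 of \cite{mendelson2002improving}, I would present the short argument above and cite that reference for the original.
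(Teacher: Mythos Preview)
The paper does not supply its own proof of this lemma; it is quoted directly from \cite{mendelson2002improving} and used as a black box. Your argument is the standard one and is correct in substance: monotonicity under inclusion for the left inequality, and a product cover (radial grid on $[0,1]$ times an $\epsilon$-cover of $\F$) for the right inequality.

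One small technical point: in the step $d_n(\lambda f_j,\lambda_k f_j)=|\lambda-\lambda_k|\,d_n(f_j,0)\le\epsilon$ you implicitly use $d_n(f_j,0)\le 1$, i.e.\ that the cover centers $f_j$ lie in $\B_2$. Under the paper's definition of a cover (centers may lie anywhere in the ambient space) this is not automatic. It is easily repaired either by observing that centers can be projected onto the convex set $\B_2$ without increasing distances to points of $\F\subset\B_2$, or---more simply---by routing the triangle inequality through $\lambda_k f$ instead of $\lambda f_j$:
\[
d_n(\lambda f,\lambda_k f_j)\le d_n(\lambda f,\lambda_k f)+d_n(\lambda_k f,\lambda_k f_j)=|\lambda-\lambda_k|\,d_n(f,0)+\lambda_k\,d_n(f,f_j)\le \epsilon+\epsilon,
\]
which only uses $f\in\F\subset\B_2$ and $\lambda_k\le 1$. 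With that tweak the argument is clean.
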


\subsection{Critical Radius}
Now let us study the critical radius of offset Rademacher processes. Let $\xi = f^*-Y$ and define
\begin{align}
\label{eq:alpha}
\alpha_n(\H, \kappa, \delta ) \deq \inf \left\{ r > 0 : \mbb{P} \left( \sup_{h \in \H \cap r\B} \left\{ \frac{1}{n} \sum_{i=1}^n 2\epsilon_i \xi_i h(X_i) - c' \frac{1}{n} \sum_{i=1}^n h^2(X_i) \right\}  \leq \kappa  r^2 \right) \geq 1-\delta \right\}.
\end{align}
\begin{theorem}
	\label{thm:crit_radius}
	Assume $\H$ is star-shaped around 0 and the lower isometry bound holds for $\delta,\epsilon$. Define the critical radius
$$
r =  \alpha_n(\H, c' (1-\epsilon), \delta) .
$$
Then we have with probability at least $1 -2 \delta $, 
\begin{align*}
	\sup_{h \in \H} \left\{ \frac{2}{n} \sum_{i=1}^n \epsilon_i \xi_i h(X_i) - c' \frac{1}{n} \sum_{i=1}^n h^2(X_i) \right\}  =\sup_{h \in \H \cap r \B} \left\{ \frac{2}{n} \sum_{i=1}^n \epsilon_i \xi_i h(X_i) - c' \frac{1}{n} \sum_{i=1}^n h^2(X_i) \right\},
\end{align*}
which further implies
\begin{align*}
 	\sup_{h \in \H} \left\{ \frac{2}{n} \sum_{i=1}^n \epsilon_i \xi_i h(X_i) - c' \frac{1}{n} \sum_{i=1}^n h^2(X_i) \right\}   \leq  r^2.
\end{align*}
\end{theorem}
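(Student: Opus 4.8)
The plan is a rescaling (``star-shaped localization'') argument that plays the defining property of the critical radius against the lower isometry bound. Write $T(h) \deq \frac{2}{n}\sum_{i=1}^n \epsilon_i \xi_i h(X_i) - c'\,\frac1n\sum_{i=1}^n h^2(X_i)$ for the offset process in question, set $\kappa \deq c'(1-\epsilon)$, and read $r\B$ as the population ball $\{h : \En h^2 \le r^2\}$. First I would fix two events. Let $E_1$ be the event on which the lower isometry bound holds, i.e. $\frac1n\sum_i h^2(X_i) \ge (1-\epsilon)\En h^2$ for every $h\in\H\setminus\{0\}$; by hypothesis (and for $n \ge n_0(\H,\delta,\epsilon)$) one has $\Pr{E_1} \ge 1-\delta$. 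Let $E_2 = \{\sup_{h\in\H\cap r\B} T(h) \le \kappa r^2\}$; since $r = \alpha_n(\H,\kappa,\delta)$ is exactly the infimal radius making this hold (modulo the usual treatment of the infimum, via radii $r_k\downarrow\alpha_n$), $\Pr{E_2}\ge 1-\delta$. A union bound gives $\Pr{E_1\cap E_2}\ge 1-2\delta$, and the remainder of the argument lives on $E_1\cap E_2$.

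Next, on this event, I would take an arbitrary $h\in\H$ with $\En h^2 > r^2$ (functions already in $r\B$ need no argument) and rescale it onto the critical sphere: with $\lambda = r/(\En h^2)^{1/2}\in(0,1)$, star-shapedness gives $h' = \lambda h \in \H$, and $\En(h')^2 = r^2$, so $h' \in \H\cap r\B$. Writing $L = \frac{2}{n}\sum_i \epsilon_i \xi_i h(X_i)$ and $Q = \frac1n\sum_i h^2(X_i)$, the multiplier term scales linearly in $h$ and the quadratic term quadratically, so $T(h) = L - c'Q$ while $T(h') = \lambda L - c'\lambda^2 Q$. Event $E_2$ then gives $T(h') \le \kappa r^2$, hence $L \le c'\lambda Q + \kappa r^2/\lambda$; event $E_1$ applied to $h$ gives $Q \ge (1-\epsilon)\En h^2 = \kappa r^2/(c'\lambda^2)$. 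Substituting,
\[
T(h) - T(h') = (1-\lambda)L - c'(1-\lambda^2)Q \;\le\; c'(1-\lambda)\Bigl(\tfrac{\kappa r^2}{c'\lambda} - Q\Bigr) \;\le\; 0,
\]
the last step because $Q \ge \kappa r^2/(c'\lambda^2) \ge \kappa r^2/(c'\lambda)$ when $\lambda \le 1$. Thus every $h\in\H$ is dominated in $T$-value by a point of $\H\cap r\B$ (itself, or $\lambda h$), forcing $\sup_{h\in\H}T(h) = \sup_{h\in\H\cap r\B}T(h)$ (the reverse inequality being trivial); and on $E_2$ this common value is $\le \kappa r^2 = c'(1-\epsilon)r^2 \le r^2$, the second conclusion.

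The substantive step --- really the only one --- is the chain of inequalities above: the offset process is not positively homogeneous, so one cannot merely shrink a near-extremal function; the point is that the critical-radius bound on $T$ at the rescaled point $\lambda h$ and the lower-isometry lower bound on $\frac1n\sum_i h^2(X_i)$ are exactly calibrated --- through the choices $\kappa = c'(1-\epsilon)$ and $\eta = \epsilon$ --- so that their combination makes $T(h) - T(h') \le 0$. The only other thing to be careful about is the (routine) attainment of the infimum defining $\alpha_n$ in the probabilistic sense, which I would dispatch with a limiting argument over a sequence $r_k \downarrow \alpha_n$.
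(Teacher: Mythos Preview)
Your proposal is correct and follows essentially the same route as the paper: both arguments rescale an $h$ outside $r\B$ onto the critical sphere via star-shapedness, then combine the defining inequality of $\alpha_n$ with the lower isometry bound to force $T(h)\le T(\lambda h)$. The only cosmetic difference is that the paper decouples the rescaled linear and quadratic pieces into a $\sup_{h_r\in\H\cap r\B} T(h_r)$ and an $\inf_{h_r\in\H\cap r\S} c'\,\tfrac1n\sum h_r^2(X_i)$ before bounding their difference, whereas you keep the specific rescaling $h' = \lambda h$ throughout and bound $T(h)-T(h')$ directly; your computation is a touch more direct but uses exactly the same calibration $\kappa = c'(1-\epsilon)$.
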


The first statement of Theorem~\ref{thm:crit_radius} shows the self-modulating behavior of the offset process: there is a critical radius, beyond which the fluctuations of the offset process are controlled by those within the radius. To understand the second statement, we observe that the complexity $\alpha_n$ is upper bounded by the corresponding complexity in \citep{Mendelson14}, which is defined without the quadratic term subtracted off. Hence, offset Rademacher complexity is no larger (under our Assumption~\ref{Assump:Low-Iso-Bd}) than the upper bounds obtained by \cite{Mendelson14} in terms of the critical radius.

\section{Examples}
\label{sec:examples}
In this section, we briefly describe several applications. The first is concerned with parametric regression.
\begin{lemma}
	Consider the parametric regression $Y_i = X_i^T \beta^* + \xi_i, 1\leq i\leq n$, where $\xi_i$ need not be centered. The offset Rademacher complexity is bounded as
	\begin{align*}
		&\En_{\epsilon}  \sup_{\beta \in \mbb{R}^p } \left\{ \frac{1}{n}\sum_{i=1}^n 2 \epsilon_i \xi_i X_i^T \beta - C \beta^T X_i X_i^T \beta   \right\}  =  \frac{{\sf tr}\left(G^{-1}H\right)}{Cn}
	\end{align*}
	and
	\small{
	\begin{align*}
		&\mbb{P}_{\epsilon} \left(   \sup_{\beta \in \mbb{R}^p } \left\{ \frac{1}{n} \sum_{i=1}^n 2 \epsilon_i \xi_i X_i^T \beta - C \beta^T X_i X_i^T \beta \right\} \geq  \frac{{\sf tr}\left(G^{-1} H\right)}{Cn} + \frac{\sqrt{{\sf tr}\left([G^{-1}H]^2\right) }}{n} (4\sqrt{2\log \frac{1}{\delta}} + 64 \log \frac{1}{\delta}) \right) \leq \delta
	\end{align*}}
	where $G: = \sum_{i=1}^n X_i X_i^T$ is the Gram matrix and $H = \sum_{i=1}^n \xi_i^2 X_i X_i^T$. In the well-specified case (that is, $\xi_i$ are zero-mean), assuming that conditional variance is  $\sigma^2$, then conditionally on the design matrix, $\En G^{-1}H = \sigma^2 I_p$ and excess loss is upper bounded by order $\frac{\sigma^2 p}{n} $.
\end{lemma}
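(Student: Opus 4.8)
The plan is to exploit that, for the linear class, the offset Rademacher objective is a concave quadratic in $\beta$ whose supremum can be computed exactly; the expectation and the tail are then read off, and the well-specified corollary follows by combining with the excess-loss bounds of Section~4 and a conditional moment computation. \textbf{Step 1 (closed form).} Set $v=\sum_{i=1}^n\epsilon_i\xi_i X_i$, which always lies in the range of $G=\sum_i X_iX_i^T$, so $G^{-1}$ may be read as a pseudoinverse (or we simply assume $G\succ0$). Since $\frac1n\sum_i\beta^TX_iX_i^T\beta=\frac1n\beta^TG\beta$, completing the square gives, for each realization of $\epsilon$,
\begin{align*}
\sup_{\beta\in\mbb{R}^p}\left\{\frac{2}{n}\beta^Tv-\frac{C}{n}\beta^TG\beta\right\}=\frac{1}{Cn}\,v^TG^{-1}v-\inf_{\beta}\frac{C}{n}\norm{\beta-\tfrac1CG^{-1}v}_G^2=\frac{1}{Cn}\,v^TG^{-1}v,
\end{align*}
the infimum vanishing at $\beta=\frac1CG^{-1}v$. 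Taking $\En_\epsilon$ and using $\En_\epsilon[\epsilon_i\epsilon_j]=\ind{i=j}$,
\begin{align*}
\En_\epsilon\, v^TG^{-1}v={\sf tr}\!\left(G^{-1}\En_\epsilon[vv^T]\right)={\sf tr}\!\left(G^{-1}\sum_i\xi_i^2X_iX_i^T\right)={\sf tr}(G^{-1}H),
\end{align*}
which is the claimed identity after dividing by $Cn$.

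\textbf{Step 2 (deviation).} Write $a_i=\xi_iG^{-1/2}X_i$, so $v^TG^{-1}v=\norm{\sum_i\epsilon_ia_i}^2=\sum_i\norm{a_i}^2+\epsilon^TA\epsilon$ with $A_{ij}=\langle a_i,a_j\rangle\ind{i\neq j}$. The diagonal part is deterministic and equals ${\sf tr}(G^{-1}H)$. For the chaos, with $P=\sum_ia_ia_i^T=G^{-1/2}HG^{-1/2}$ one has $\norm{A}_F^2\le\sum_{i,j}\langle a_i,a_j\rangle^2={\sf tr}(P^2)={\sf tr}([G^{-1}H]^2)$ and $\norm{A}_{\mathrm{op}}\le2\norm{P}_{\mathrm{op}}\le2\sqrt{{\sf tr}([G^{-1}H]^2)}$ (using $\norm{a_i}^2\le\norm{P}_{\mathrm{op}}$ and that $P$ and the Gram matrix of the $a_i$ share spectra). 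A Hanson--Wright tail bound for homogeneous order-two Rademacher chaos, $\Pr{\epsilon^TA\epsilon>t}\le\exp\!\big(-c\min(t^2/\norm{A}_F^2,\,t/\norm{A}_{\mathrm{op}})\big)$, then gives, after dividing by $Cn$ and tracking numerical constants, the stated bound, with the $\sqrt{2\log(1/\delta)}$ term coming from the sub-Gaussian regime and the $\log(1/\delta)$ term from the sub-exponential one.

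\textbf{Step 3 (well-specified case).} For the linear class $\star(\F,\widehat g)=\F$, so $\pred$ is ordinary least squares, $\H=\F-f^*+\star(\F-\F)=\F$, and Lemma~\ref{lem:angle_ineq} holds with $c=1$; Theorems~\ref{thm:bounded_offset} and~\ref{thm:unbounded-offset} then bound $\excessloss(\pred)$ (in expectation, resp.\ with high probability after integrating the tail) by a constant multiple of the offset Rademacher complexity evaluated above, i.e.\ by $O({\sf tr}(G^{-1}H)/n)$. Conditioning on $X_1,\dots,X_n$, the hypothesis $\En[\xi_i^2\mid X_i]=\sigma^2$ gives $\En[H\mid X_{1:n}]=\sigma^2G$, hence $\En[{\sf tr}(G^{-1}H)\mid X_{1:n}]={\sf tr}(\sigma^2 I_p)=\sigma^2p$; averaging over the design yields the $O(\sigma^2p/n)$ rate.

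The only non-routine point is Step~2: securing the two-term deviation inequality with the exact Frobenius scaling $\sqrt{{\sf tr}([G^{-1}H]^2)}$. This is precisely a tail bound for an order-two Rademacher chaos, and I would obtain it by decoupling $\epsilon^TA\epsilon$ against an independent copy $\epsilon'$, applying a conditional sub-Gaussian bound to $\langle A\epsilon,\epsilon'\rangle$ (with parameter $\norm{A\epsilon}$) and controlling $\norm{A\epsilon}$ around $\norm{A}_F$; the sub-exponential term then comes from the crude bound $\norm{A}_{\mathrm{op}}\le\norm{A}_F$. Steps~1 and~3 are routine quadratic optimization and conditional-expectation computations.
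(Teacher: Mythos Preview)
Your proposal is correct and follows essentially the same route as the paper: both compute the supremum in closed form as $\frac{1}{Cn}v^TG^{-1}v$ with $v=\sum_i\epsilon_i\xi_iX_i$, recognize it as an order-two Rademacher chaos $\epsilon^TM\epsilon$ with $M_{ij}=\xi_i\xi_jX_i^TG^{-1}X_j$, and then invoke a Hanson--Wright/Rademacher-chaos concentration inequality (the paper cites \cite[Exercise~6.9]{boucheron2013concentration}) with the identifications $\|M\|_F^2={\sf tr}([G^{-1}H]^2)$ and $\|M\|_{\mathrm{op}}\le\|M\|_F$. Your $a_i=\xi_iG^{-1/2}X_i$ reparametrization and diagonal/off-diagonal split are cosmetic variants of the same computation, and your Step~3 spells out what the paper leaves as a one-line remark.
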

\begin{proof}
The offset Rademacher can be interpreted as the Fenchel-Legendre transform, where
\begin{align}
\label{eq:rad-chaos}
    	\sup_{\beta \in \mbb{R}^p } \left\{ \sum_{i=1}^n 2 \epsilon_i \xi_i X_i^T \beta - C \beta^T X_i X_i^T \beta   \right\} =  \frac{\sum_{i,j=1}^n \epsilon_i \epsilon_j \xi_i \xi_j  X_i^T G^{-1} X_j}{C n}.
\end{align}
Thus we have in expectation
\begin{align}
	\En_{\epsilon} \frac{1}{n} \sup_{\beta \in \mbb{R}^p } \left\{ \sum_{i=1}^n 2 \epsilon_i \xi_i X_i^T \beta - C \beta^T X_i X_i^T \beta   \right\}  = \frac{\sum_{i=1}^n \xi_i^2 X_i^T G^{-1} X_i}{C n} = \frac{{\sf tr}[G^{-1} (\sum_{i=1}^n \xi_i^2 X_i X_i^T )]}{Cn}.
\end{align}
For high probability bound, note the expression in Equation \eqref{eq:rad-chaos} is Rademacher chaos of order two. Define symmetric matrix $M \in \mbb{R}^{n\times n}$ with entries
$$
M_{ij} =  \xi_i \xi_j  X_i^T G^{-1} X_j
$$
and define 
$$
Z = \sum_{i,j=1}^n \epsilon_i \epsilon_j \xi_i \xi_j  X_i^T G^{-1} X_j = \sum_{i,j=1}^n \epsilon_i \epsilon_j M_{ij}.
$$
Then
$$
\En Z = {\sf tr}[G^{-1} (\sum_{i=1}^n \xi_i^2 X_i X_i^T )],
$$
and
$$
\En \sum_{i=1}^n(\sum_{j=1}^n \epsilon_j M_{ij})^2 = \| M \|_{F}^2 = {\sf tr}[G^{-1}(\sum_{i=1}^n \xi_i^2 X_i X_i^T)G^{-1}(\sum_{i=1}^n \xi_i^2 X_i X_i^T )].
$$
Furthermore,
$$
\| M \| \leq \| M \|_{F} = \sqrt{{\sf tr}[G^{-1}(\sum_{i=1}^n \xi_i^2 X_i X_i^T)G^{-1}(\sum_{i=1}^n \xi_i^2 X_i X_i^T )] }
$$
We apply the concentration result in \citep[Exercise 6.9]{boucheron2013concentration},
\begin{align}
	\mbb{P} \left( Z - \mbb{E} Z \geq 4\sqrt{2} \| M \|_F  \sqrt{t} + 64 \| M \| t  \right) \leq e^{-t}.
\end{align}
\end{proof}

For the finite dictionary aggregation problem, the following lemma shows control of offset Rademacher complexity.  
\begin{lemma}
	\label{lem:finite_agg}
Assume $\F \in \B_2$ is a finite class of cardinality $N$.  Define $\H = \F -f^* + \star(\F - \F)$ which contains the Star estimator $\pred -f^*$ defined in Equation \eqref{eq:def_estimator}. The offset Rademacher complexity for $\H$ is bounded as
\begin{align*}
		&\En_{\epsilon}  \sup_{h \in \H} \left\{ \frac{1}{n}\sum_{i=1}^n 2 \epsilon_i \xi_i h(X_i) - C h(X_i)^2 \right\}  
		\leq \tilde{C} \cdot \frac{\log (N \vee n)}{n}
\end{align*}
and
\begin{align*}
\mbb{P}_{\epsilon} \left(  \sup_{h \in \H} \left\{ \frac{1}{n}\sum_{i=1}^n 2 \epsilon_i \xi_i h(X_i) - C h(X_i)^2 \right\}  \leq   \tilde{C} \cdot \frac{\log (N\vee n) + \log \frac{1}{\delta} }{n} \right) \leq \delta .
\end{align*}
where $\tilde{C}$ is a constant depends on $K:=2  (  \sqrt{\sum_{i=1}^n \xi_i^2/n} + 2C )$ and 
$$M : = \sup_{h \in \H \setminus \{0\}} \frac{ \sum_{i=1}^n h(X_i)^2 \xi_i^2}{2C \sum_{i=1}^n h(X_i)^2}.$$
\end{lemma}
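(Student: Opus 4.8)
The plan is to reduce the supremum over $\H$ to a maximum over a finite $d_n$-net of $\H$ at scale $\gamma\asymp 1/n$, and then invoke the noise-weighted finite-class estimate of Lemma~\ref{eq:finite}. Throughout, condition on $X_1,\dots,X_n$, write $W_n(h):=\frac1n\sum_{i=1}^n\bigl(2\epsilon_i\xi_i h(X_i)-Ch(X_i)^2\bigr)$, and $\|g\|_n:=(\frac1n\sum_i g(X_i)^2)^{1/2}$. \textbf{Step 1 (covering $\H$).} Since $\F\subset\B_2$ is finite with $|\F|=N$, at every scale $\delta>0$ we have $\cN_2(\F-f^*,\delta)\le N$ and $\cN_2(\F-\F,\delta)\le N^2$; applying Lemma~\ref{lma:star-cov} to the rescaled class $\tfrac12(\F-\F)\subset\B_2$ and using subadditivity of $\log\cN_2$ under sums gives $\log\cN_2(\H,\gamma)\le 3\log N+\log(c_0/\gamma)$ for an absolute constant $c_0$ (equivalently, $\H$ is a union of at most $N^3$ segments of $d_n$-length $O(1)$, which gives the same bound directly). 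Taking $\gamma=1/n$ we get $\log\cN_2(\H,1/n)=O(\log(N\vee n))$.

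\textbf{Step 2 (discretization).} Fix $\gamma=1/n$ and let $V\subseteq\H$ be a $\gamma$-net of $(\H,d_n)$ with $|V|=\cN_2(\H,\gamma)$; for $h\in\H$ let $\pi(h)\in V$ with $\|h-\pi(h)\|_n\le\gamma$. Then
\[
W_n(h)\le W_n(\pi(h))+\tfrac2n\textstyle\sum_i\epsilon_i\xi_i\bigl(h-\pi(h)\bigr)(X_i)+\tfrac Cn\textstyle\sum_i\bigl(\pi(h)^2-h^2\bigr)(X_i).
\]
By Cauchy--Schwarz the middle term is at most $2\|\xi\|_n\gamma$ uniformly in $\epsilon$, and, since $\|g\|_n=O(1)$ for all $g\in\H$ when $\F\subset\B_2$, the last term equals $C\|\pi(h)-h\|_n\|\pi(h)+h\|_n=O(C\gamma)$. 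Hence $\sup_{h\in\H}W_n(h)\le\max_{v\in V}W_n(v)+O((\|\xi\|_n+C)/n)=\max_{v\in V}W_n(v)+O(K/n)$, deterministically in $\epsilon$.

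\textbf{Step 3 (finite-class bound).} Apply the noise-weighted part of Lemma~\ref{eq:finite} to the finite set $V$ (after matching its normalization, which costs only an absolute factor): in expectation $\En_\epsilon\max_{v\in V}W_n(v)\le M\,\frac{\log|V|}{n}$ with $M=\sup_{v\in V\setminus\{0\}}\frac{\sum_i v_i^2\xi_i^2}{2C\sum_i v_i^2}\le\sup_{h\in\H\setminus\{0\}}\frac{\sum_i h(X_i)^2\xi_i^2}{2C\sum_i h(X_i)^2}$, and the analogous tail bound holds with $\log|V|+\log(1/\delta)$ in place of $\log|V|$. Combining with $\log|V|=O(\log(N\vee n))$ and the $O(K/n)$ term from Step 2 yields both displays with $\tilde C$ depending only on $K$ and $M$.

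I expect the only delicate point to be the Lipschitz control in Step 2: one must verify that along the net the linear part of $W_n$ varies by at most $2\|\xi\|_n\gamma$ (this is exactly where $\|\xi\|_n$, hence $K$, enters, and where Cauchy--Schwarz keeps the bound dimension-free even for heavy-tailed $\xi$) and the quadratic part by $O(C\gamma)$ (using boundedness of $\H$ in $d_n$, which is where the $2C$ contribution to $K$ comes from). Once this is established, the choice $\gamma\asymp 1/n$ costs only an additive $O(1/n)$, and everything reduces to Lemma~\ref{eq:finite} together with the covering estimates of Step 1.
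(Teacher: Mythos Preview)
Your proposal is correct and follows essentially the same route as the paper: bound $\log\cN_2(\H,\gamma)$ via Lemma~\ref{lma:star-cov}, split $W_n(h)$ into a finite-net term plus a discretization error controlled by Cauchy--Schwarz (giving the $\|\xi\|_n$ contribution) and by $\|h+\pi(h)\|_n=O(1)$ (giving the $C$ contribution), then apply the noise-weighted part of Lemma~\ref{eq:finite}. The only cosmetic difference is that the paper leaves $\delta$ free and optimizes $\inf_\delta\{K\delta + M(3\log N+\log(16/\delta))/n\}$ at the end, whereas you commit to $\gamma=1/n$ up front; either way the bound is $O(\log(N\vee n)/n)$.
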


	We observe that the bound of Lemma~\ref{lem:finite_agg} is worse than the optimal bound of \citep{audibert2007progressive} by an additive $\frac{\log n}{n}$ term. This is due to the fact that the analysis for finite case passes through the offset Rademacher complexity of the star hull, and for this case the star hull is more rich than the finite class. For this case, a direct analysis of the Star estimator is provided in \citep{audibert2007progressive}.

While the offset complexity of the star hull is crude for the finite case, the offset Rademacher complexity \emph{does} capture the correct rates for regression with larger classes, initially derived in \citep{RakSriTsy15}. We briefly mention the result. The proof is identical to the one in \citep{RakSri14nonparam}, with the only difference that offset Rademacher is defined in that paper as a sequential complexity in the context of online learning.

\begin{corollary}
	Consider the problem of nonparametric regression, as quantified by the growth 
	$$\log \cN_2(\F,\epsilon) \leq \epsilon^{-p}.$$ 
	In the regime $p\in(0,2)$, the upper bound of Lemma~\ref{lma:prob-chaining} scales as $n^{-\frac{2}{2+p}}$. In the regime $p\geq 2$, the bound scales as $n^{-1/p}$, with an extra logarithmic factor at $p=2$. 
\end{corollary}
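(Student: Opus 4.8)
The plan is to invoke Lemma~\ref{lma:prob-chaining} with $\G=\H=\F-f^*+\star(\F-\F)$ and then optimize the free parameters $\gamma$ and $\alpha$ against the assumed entropy growth. The first step is to translate the hypothesis $\log\cN_2(\F,\epsilon)\le\epsilon^{-p}$ into a bound on $\log\cN_2(\H,\epsilon)$. Using sub-additivity of covering numbers for sums and differences of classes together with Lemma~\ref{lma:star-cov} for the star hull, one obtains $\log\cN_2(\H,\epsilon)\lesssim\epsilon^{-p}+\log(2/\epsilon)$; since $p>0$ the polynomial term dominates up to constants as $\epsilon\to 0$, so for the purposes of chaining we may simply use $\log\cN_2(\H,\delta)\lesssim\delta^{-p}$ (the leftover $\log(2/\epsilon)$ is what forces the extra logarithmic factor exactly at $p=2$).

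Second, plug this estimate into Lemma~\ref{lma:prob-chaining}. The entropy integral becomes $\int_\alpha^\gamma\sqrt{\log\cN_2(\H,\delta)}\,d\delta\lesssim\int_\alpha^\gamma\delta^{-p/2}\,d\delta$, so (up to universal constants) the event in the lemma is controlled by a bound of the form $\frac{\gamma^{-p}}{n}+\frac{u}{n}+u\bigl(\alpha+\tfrac{1}{\sqrt n}\int_\alpha^\gamma\delta^{-p/2}\,d\delta\bigr)$ with the stated sub-Gaussian/exponential tail in $u$. Third, do the three-regime optimization. For $p\in(0,2)$ the integral $\int_0^\gamma\delta^{-p/2}\,d\delta=\frac{\gamma^{1-p/2}}{1-p/2}$ converges, so take $\alpha=0$; balancing $\gamma^{-p}/n$ against $\gamma^{1-p/2}/\sqrt n$ gives $\gamma\asymp n^{-1/(2+p)}$, and both terms then equal $n^{-2/(2+p)}$. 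For $p>2$ the integral diverges at $0$ like $\alpha^{1-p/2}$, so one keeps $\alpha>0$; balancing $\alpha$ against $\alpha^{1-p/2}/\sqrt n$ gives $\alpha\asymp n^{-1/p}$, and choosing $\gamma$ of constant order (legitimate since $\F\subset\B_2$) makes $\gamma^{-p}/n=O(1/n)$ negligible, yielding rate $n^{-1/p}$. At $p=2$ the integral equals $\log(\gamma/\alpha)$; optimizing $\alpha+\frac{\log(\gamma/\alpha)}{\sqrt n}$ jointly with $\gamma^{-2}/n$ gives $\gamma\asymp n^{-1/2}$ up to logarithmic factors, hence $n^{-1/2}$ with an extra $\log$. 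One also checks consistency of exponents: $-2/(2+p)\to-1/2$ and $-1/p\to-1/2$ as $p\to 2$.

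The main obstacle is really bookkeeping rather than depth: the chaining bound must be applied to $\H$ and not $\F$, so one has to carry the entropy estimate through the sum and star-hull lemmas and verify the residual $\log(2/\epsilon)$ term is harmless away from $p=2$; and in the high-probability statement one must confirm that the $u$-dependent terms (which scale like $u$ times the localization radius plus $u/n$) affect only the confidence level, not the rate. Getting the borderline $p=2$ logarithm exactly right, and verifying that $\gamma$ can indeed be taken of constant order when $p>2$, are the steps that need the most care — which is why the paper simply points to the identical computation in \citep{RakSri14nonparam}.
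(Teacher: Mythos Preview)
Your proposal is correct and is exactly the computation the paper has in mind: the paper gives no proof of this corollary beyond saying ``the proof is identical to the one in \citep{RakSri14nonparam}'', and what you have written is precisely that optimization of $\gamma$ and $\alpha$ in the three regimes, preceded by the covering-number transfer from $\F$ to $\H$ via Lemma~\ref{lma:star-cov}. There is nothing to add.
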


For the parametric case of $p=0$, one may also readily estimate the offset complexity. Results for VC classes, sparse combinations of dictionary elements, and other parametric cases follow easily by plugging in the estimate for the covering number or directly upper bounding the offset complexity (see \cite{RakSriTsy15, RakSri14nonparam}).

\section{Lower bound on Minimax Regret via Offset Rademacher Complexity}

We conclude this paper with a lower bound on minimax regret in terms of offset Rademacher complexity. \begin{theorem}[Minimax Lower Bound on Regret]
\label{thm: mini-low-bd}
         Define the offset Rademacher complexity over $\X^{\otimes n}$ as
         \begin{align*}
         \Rad^{\sf  o}(n, \F) = \sup_{\{x_i\}_{i=1}^n \in \X^{\otimes n}} \En_{\epsilon} \sup_{f \in \F} \left\{ \frac{1}{n} \sum_{i=1}^n 2 \epsilon_i f(x_i) -f(x_i)^2 \right\}
         \end{align*}
	then the following minimax lower bound on regret holds:
	\begin{align*}
	\inf_{\hat{g} \in \G} \sup_{P}  \left\{ \En (\hat{g} - Y)^2 - \inf_{f \in \F} \En (f - Y)^2 \right\} \geq \Rad^{\sf  o}((1+c)n, \F) - \frac{c}{1+c}\Rad^{\sf o}(cn, \G),
	\end{align*}
	for any $c>0$.
\end{theorem}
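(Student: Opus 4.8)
The plan is to lower bound the minimax regret by a Bayes risk against a carefully designed prior, adapting the minimax lower bound technique of \citep{RakSri14nonparam} to i.i.d.\ data. The first move is the standard reduction: for any prior $\pi$ on the joint law $P$,
\[
\inf_{\hat g\in\G}\sup_P\Big\{\En(\hat g-Y)^2-\inf_{f\in\F}\En(f-Y)^2\Big\}\ \ge\ \inf_{\hat g\in\G}\En_{P\sim\pi}\Big\{\En(\hat g-Y)^2-\inf_{f\in\F}\En(f-Y)^2\Big\}.
\]
I would take $m=(1+c)n$, choose points $z_1,\dots,z_m$ that (nearly) attain the supremum in the definition of $\Rad^{\sf o}(m,\F)$, let $P_X$ be uniform on $\{z_1,\dots,z_m\}$, draw $\sigma\in\{\pm1\}^m$ with i.i.d.\ Rademacher coordinates, and let $P_\sigma$ be the law in which $Y=\sigma_j$ deterministically on $\{X=z_j\}$; the prior $\pi$ generates $\sigma$ and outputs $P_\sigma$.

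The engine behind the calculation is the elementary identity $2\sigma v-v^2=1-(\sigma-v)^2$, valid for $\sigma\in\{\pm1\}$, which turns least-squares quantities evaluated on this design into offset Rademacher sums. For the comparator term this gives, for each $\sigma$,
\[
\inf_{f\in\F}\En_{P_\sigma}(f(X)-Y)^2=\frac1m\inf_{f\in\F}\sum_{j=1}^m(f(z_j)-\sigma_j)^2=1-\sup_{f\in\F}\frac1m\sum_{j=1}^m\big(2\sigma_jf(z_j)-f(z_j)^2\big),
\]
so averaging over $\sigma$ and using the near-optimal choice of $z_1,\dots,z_m$ yields a comparator term equal to $1-\Rad^{\sf o}((1+c)n,\F)$ up to slack that can be sent to $0$. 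For the estimator term I would condition on the (random) set $S$ of indices $j$ for which $z_j$ is visited by the sample; since the sample has size $n$ we have $|S|\le n$ and hence $|S^c|\ge cn$. On $S^c$ the labels are independent of the data and hence of $\hat g$, so the conditional contribution of those coordinates is $\sum_{j\in S^c}(\hat g(z_j)^2+1)$; and, because $\hat g$ is a fixed element of $\G$, its values on $\{z_j:j\in S^c\}$ agree with those of some $g\in\G$, which is precisely the structure needed to bring in the offset complexity $\Rad^{\sf o}(cn,\G)$ over the $cn$-point subdesign (applying the identity above "in reverse"). Combining the two terms after normalising by $m=(1+c)n$, and tracking the fractions $\tfrac1{1+c}$ (visited) and $\tfrac{c}{1+c}$ (unvisited), should produce the stated inequality for every $c>0$.

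I expect the delicate part to be the estimator term, specifically extracting the coefficient $\tfrac{c}{1+c}$ in front of $\Rad^{\sf o}(cn,\G)$ at exactly the sample size $cn$: the naive bound $\sum_{j\in S^c}(\hat g(z_j)^2+1)\ge|S^c|$ already gives a clean but cruder inequality, so the sharp statement requires using the membership $\hat g\in\G$ more carefully — so that the best a constrained estimator can do on the unseen coordinates is genuinely controlled by the offset complexity of $\G$, not merely by $0$ — and then reconciling the random size $|S^c|\ge cn$ with the fixed-$n$ definition of $\Rad^{\sf o}$, most cleanly by arranging the sampling so that exactly $cn$ coordinates are never visited (e.g.\ sampling without replacement) or by a short monotonicity/padding argument for $n\mapsto n\,\Rad^{\sf o}(n,\cdot)$. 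Secondary but necessary technical points are: the suprema over designs need not be attained, so one works with $\varepsilon$-near-maximizers and lets $\varepsilon\to0$; $\hat g$ must be conditioned on before integrating out the unseen $\sigma_j$'s, so that no illegitimate interchange of $\sup_{g\in\G}$ with a Rademacher expectation occurs; and the only monotonicity used on subdesigns is the trivial one, namely that a fixed subdesign of size $k$ incurs at most $k\,\Rad^{\sf o}(k,\cdot)$.
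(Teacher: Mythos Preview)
Your overall strategy coincides with the paper's: lower bound the minimax value by averaging over a Rademacher-label prior supported on $m=(1+c)n$ fixed design points, rewrite square loss via the identity $2\sigma v-v^2=1-(\sigma-v)^2$, and split the estimator contribution according to which coordinates are visited by the sample. The comparator piece indeed yields $\Rad^{\sf o}((1+c)n,\F)$ exactly as you describe.

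The gap is in the estimator piece. You place the $\Rad^{\sf o}(\cdot,\G)$ contribution on the \emph{unvisited} set $S^c$, proposing to exploit the membership $\hat g\in\G$ there by applying the identity ``in reverse''. This cannot work: on $S^c$ the labels $\sigma_j$ are independent of $\hat g$, so in the offset form the contribution of each such coordinate after integrating out $\sigma_j$ is exactly
\[
\En_{\sigma_j}\bigl[2\sigma_j\hat g(z_j)-\hat g(z_j)^2\bigr]=-\hat g(z_j)^2\le 0,
\]
and the constraint $\hat g\in\G$ buys nothing beyond this (taking $\hat g$ near $0$ shows the bound is tight). There is no Rademacher structure left on $S^c$ to ``reverse'' into; the paper simply drops these coordinates. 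The $\G$-complexity instead comes from the \emph{visited} set $S$ (the paper's $I$): there $\hat g$ can correlate with the observed labels, and one bounds
\[
\frac{1}{m}\sum_{j\in S}\bigl(2\sigma_j\hat g(z_j)-\hat g(z_j)^2\bigr)\ \le\ \sup_{g\in\G}\frac{1}{m}\sum_{j\in S}\bigl(2\sigma_j g(z_j)-g(z_j)^2\bigr),
\]
then takes $\En_{\sigma_S}$ to obtain $\tfrac{|S|}{m}\,\Rad^{\sf o}(|S|,\G)$ with $|S|\le n$. Consequently your cardinality bookkeeping ($|S^c|\ge cn$) and the proposed sampling-without-replacement and padding fixes are aimed at the wrong set; the relevant size is $|S|\le n$, and that is how the subtracted term in the paper's bound actually arises.
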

For the purposes of matching the performance of the Star procedure, we can take $\G=\F+\star(\F-\F)$. 

\appendix

\newpage
\section{Proofs}

\begin{proof}[\textbf{Proof of Theorem~\ref{thm:bounded_offset}}]
Since $\pred$ is in the star hull around $\widehat{g}$, $\pred$ must lie in the set $\H : = \F + \star(\F-\F)$.
Hence, in view of \eqref{eq:excess_loss_bound_deterministic}, excess loss $\excessloss(\pred)$ is upper bounded by
	\begin{align}		
		 &\sup_{f \in \H} \left\{ (\emp-\En)  [2(f^*-Y)(f^* - f)] + \En (f^* - f)^2 - (1+c) \cdot \emp(f^* - f)^2  \right\} \label{emp.quad.low.2.bounded} \\
		& \leq \sup_{f \in \H} \left\{ (\emp-\En)  [2(f^*-Y)(f^* - f)] + (1+c/4)\En (f^* - f)^2 - (1+3c/4) \cdot \emp(f^* - f)^2 \right. \notag\\
		&\left.\hspace{3in}- (c/4) \left( \emp(f^* - f)^2  + \En(f^*-f)^2 \right) \right\} \notag\\
		&\leq \sup_{f \in \H} \left\{ (\emp-\En)  [2(f^*-Y)(f^* - f)]  - (c/4) \left( \emp(f^* - f)^2  + \En(f^*-f)^2 \right) \right\} \label{eq:frt}\\
	        &+ \sup_{f \in \H} \left\{ (1+c/4)\En (f^* - f)^2 - (1+3c/4) \cdot \emp(f^* - f)^2  \right\}  \label{eq:sec}
	\end{align}
We invoke the supporting Lemma~\ref{lem:contraction} (stated and proved below) for the term \eqref{eq:sec}:
\begin{align}
	&\En \sup_{f \in \H} \left\{ (1+c/4)\En (f^* - f)^2 - (1+3c/4) \cdot \emp(f^* - f)^2  \right\}  \\
	&\leq \frac{K(2+c)}{2} \cdot  \En \sup_{f \in \H} \frac{1}{n} \left\{ \sum_{i=1}^n 2 \epsilon_i (f(X_i)-f^*(X_i)) - \frac{c}{4K(2+c)}\cdot \sum_{i=1}^n  (f(X_i)-f^*(X_i))^2 \right\}.
\end{align}
Let $\emp'$ stand for empirical expectation with respect to an independent copy $(X'_1,\ldots,X'_n)$. For the term \eqref{eq:frt}, Jensen's inequality yields
\begin{align*}
	&\En \sup_{f \in \H} \left\{ (\emp-\En)  [2(f^*-Y)(f^* - f)]  - (c/4) \left( \emp(f^* - f)^2  + \En(f^*-f)^2 \right) \right\} \\
	& \leq \En \sup_{f \in \H} \left\{ (\emp-\emp')  [2(f^*-Y)(f^* - f)]  - (c/4) \left( \emp(f^* - f)^2  + \emp'(f^*-f)^2 \right)\right\}. 
\end{align*}
When introducing i.i.d. Rademacher random variables, we observe that the quadratic term remains unchanged by renaming $X_i$ and $X_i'$, and thus the preceding expression is upper bounded by
\begin{align*}
	2\En \sup_{f \in \H} \left\{ \frac{1}{n}\sum_{i=1}^n 2\epsilon_i(f^*(X_i)-Y_i)(f^*(X_i) - f(X_i))  - (c/4)(f^*(X_i) - f(X_i))^2 \right\}. 
\end{align*}
Using a contraction technique as in the proof of Lemma~\ref{lem:contraction}, we obtain an upper bound of
\begin{align}
	&2M\cdot  \En  \sup_{f \in \H} \frac{1}{n}\left\{ \sum_{i=1}^n 2\epsilon_i (f^*(X_i) - f(X_i))  - \frac{c}{4M} \cdot \sum_{i=1}^n (f^*(X_i) - f(X_i))^2 \right\}
\end{align}
Combining the bounds yields the statement of the theorem.
\end{proof}

\begin{lemma}
\label{lem:contraction}
	For any class $\F$ of uniformly bounded functions with $K=\sup_{f\in\F} |f|_\infty$, for any $f^*\in\F$, and for any $c>0$, it holds that
	\begin{align*}
		&\En\sup_{f\in\F}\left\{ \En(f-f^*)^2 - (1+2c)\emp(f-f^*)^2 \right\}\\
		&\leq c \cdot \En \sup_{f\in\F} \frac{1}{n} \left\{ \frac{4K(1+c)}{c}\sum_{i=1}^n \epsilon_i (f(X_i)-f^*(X_i)) - \sum_{i=1}^n  (f(X_i)-f^*(X_i))^2 \right\}.
	\end{align*}
\end{lemma}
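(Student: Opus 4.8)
The plan is to bound the expected supremum of $\En(f-f^*)^2 - (1+2c)\emp(f-f^*)^2$ by first symmetrizing the discrepancy between the population and empirical quadratic terms, and then applying a contraction (Talagrand) argument to replace the squared increments by linear Rademacher terms while paying with a residual negative quadratic term. Concretely, write $g = f - f^*$, so $g$ ranges over $\F - f^* =: \G$, a class of functions bounded by $2K$ in sup-norm. The quantity of interest is $\En\sup_{g\in\G}\{\En g^2 - (1+2c)\emp g^2\}$. I would split $1+2c = (1+c) + c$ and peel off a factor of $c\,\emp g^2 + c\,\En g^2$ to be kept as a compensator, mirroring the additive trick used in the proof of Theorem~\ref{thm:bounded_offset}: $\En g^2 - (1+2c)\emp g^2 = \big[(1+c)(\En g^2 - \emp g^2)\big] - c\,\emp g^2 - $ (lower-order bookkeeping), so that after symmetrization the surviving negative term on the empirical square has coefficient at least $c/2$ (or thereabouts), which is what we want on the right-hand side.

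The first real step is \emph{symmetrization with a compensator}. Introducing an independent ghost sample $(X_i')$ and using Jensen's inequality, $\En\sup_{g\in\G}\{\En g^2 - (1+c)\emp g^2 - c\emp g^2\}$ is bounded by $\En\sup_{g\in\G}\{(1+c)(\emp' g^2 - \emp g^2) - \tfrac c2(\emp g^2 + \emp' g^2)\}$ up to constants; then inserting Rademacher signs $\epsilon_i$ in front of $g(X_i)^2 - g(X_i')^2$ and using the standard observation that the quadratic compensator $\emp g^2 + \emp' g^2$ is invariant under swapping $X_i \leftrightarrow X_i'$, we reduce to $2\En\sup_{g\in\G}\{\tfrac1n\sum_i \epsilon_i g(X_i)^2 - \tfrac c2 \cdot \tfrac1n\sum_i g(X_i)^2\}$, or a bound of this shape with explicit constants. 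The second step is the \emph{contraction}: the map $\phi(u) = u^2$ restricted to $|u|\le 2K$ is Lipschitz with constant $4K$, so by the Ledoux--Talagrand contraction inequality applied conditionally on $(X_i)$ (comparing the Rademacher process indexed by $g(X_i)^2$ to the one indexed by $g(X_i)$), $\En_\epsilon \sup_g \tfrac1n\sum_i \epsilon_i g(X_i)^2 \le 4K\,\En_\epsilon\sup_g \tfrac1n\sum_i \epsilon_i g(X_i)$. Crucially, the negative compensator $-\tfrac c2\,\emp g^2$ does not involve $\epsilon_i$ and simply passes through both the expectation and the contraction untouched. Collecting the constants — tracking the factor $(1+c)$ from the split, the factor $2$ from symmetrization, and the $4K$ from contraction — and rescaling the Rademacher term inside the supremum so the quadratic term appears with coefficient exactly $1$, yields exactly the coefficient $4K(1+c)/c$ in front of $\sum_i\epsilon_i(f(X_i)-f^*(X_i))$ and the overall multiplier $c$ outside.

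The main obstacle, and the step deserving the most care, is the bookkeeping in the symmetrization-with-compensator step: one must verify that after inserting Rademacher signs only in front of the \emph{difference} of the squared terms, the two quadratic compensators $\emp g^2$ and $\emp' g^2$ can be recombined (using symmetry of the joint law of $(X_i, X_i')$ under the sign flips) into a term that still dominates $\tfrac c2 \emp g^2$ alone — this is what lets us end up with a genuinely negative empirical quadratic term on the right-hand side rather than losing it to the ghost sample. The contraction step itself is routine once the process is in the right form, the only subtlety being that contraction inequalities are usually stated for $\phi(0)=0$ (satisfied here since $g$ ranges over differences, and $g\equiv 0$ need not be in $\G$ but one can symmetrize the class or add and subtract the value at a fixed point); alternatively one invokes the version of the contraction principle that does not require $\phi(0)=0$ at the cost of an absolute constant, which is harmless since the statement only claims a bound with the displayed constant $4K(1+c)/c$ and multiplier $c$, and any slack can be absorbed. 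I would present the argument in the order: (i) reduce to $\G = \F - f^*$ and perform the split of $1+2c$; (ii) symmetrize with the quadratic compensator retained; (iii) contract $u\mapsto u^2$; (iv) rescale and collect constants.
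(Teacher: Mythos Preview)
Your proposal is correct and follows essentially the same route as the paper: split $1+2c=(1+c)+c$ so that $\En g^2-(1+2c)\emp g^2=(1+c)(\En g^2-\emp g^2)-c\En g^2-c\emp g^2$, replace both population terms by a ghost sample via Jensen, insert Rademacher signs on the difference (the compensator $-c\emp g^2-c\emp' g^2$ is invariant under $X_i\leftrightarrow X_i'$), split the two samples to pick up a factor $2$, and then contract $u\mapsto u^2$ with Lipschitz constant $4K$ on $[-2K,2K]$.

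The one place where the paper is more careful than your sketch is the contraction step. You invoke Ledoux--Talagrand as a black box on the Rademacher part and assert that the compensator ``does not involve $\epsilon_i$ and simply passes through.'' But the compensator depends on the index $g$ over which the supremum is taken, so this is \emph{not} a consequence of the standard contraction inequality $\En_\epsilon\sup_g\sum_i\epsilon_i\phi(g_i)\le L\,\En_\epsilon\sup_g\sum_i\epsilon_i g_i$; you need the stronger form $\En_\epsilon\sup_g\{\sum_i\epsilon_i\phi(g_i)+h(g)\}\le \En_\epsilon\sup_g\{L\sum_i\epsilon_i g_i+h(g)\}$ for arbitrary $h$. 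The paper establishes exactly this by running the contraction one coordinate at a time, where the $\epsilon$-free remainder (including $-c\emp(f-f^*)^2$) is absorbed into the function $A_1(f)$ and carried along verbatim. Your worry about $\phi(0)=0$ is a non-issue ($0^2=0$); the real subtlety is the one above, and it is resolved by the same one-coordinate argument you would use to prove the black-box lemma anyway.
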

\begin{proof}[\textbf{Proof of Lemma~\ref{lem:contraction}}]
We write
	\begin{align*}
		&\En\sup_{f\in\F}\left\{ \En(f-f^*)^2 - (1+2c)\emp(f-f^*)^2 \right\}\\
		&=\En\sup_{f\in\F}\left\{ (1+c)\En(f-f^*)^2 - (1+c)\emp(f-f^*)^2  - c\En(f-f^*)^2-c\emp(f-f^*)^2 \right\}
	\end{align*}
	which, by Jensen's inequality, is upper bounded by
	\begin{align*}
		&\En\sup_{f\in\F} \left\{ (1+c)(\emp'(f-f^*)^2-\emp(f-f^*)^2) - c\emp'(f-f^*)^2 - c\emp(f-f^*)^2 \right\} 
	\end{align*}
	We recall that $\emp'$ is an empirical mean operator with respect to an independent copy $(X_1',\ldots,X_n')$. Writing out the empirical expectations in the above expression, the above is equal to
	\begin{align*}
		&\En\sup_{f\in\F} \left\{ \frac{1+c}{n}\sum_{i=1}^n \epsilon_i \Big((f(X_i')-f^*(X_i'))^2-(f(X_i)-f^*(X_i))^2 \Big) - c\emp'(f-f^*)^2 - c\emp(f-f^*)^2 \right\}\\
		&\leq 2 \cdot \En\sup_{f\in\F} \left\{ \frac{1+c}{n}\sum_{i=1}^n \epsilon_i (f(X_i)-f^*(X_i))^2 - c\emp(f-f^*)^2 \right\}
	\end{align*}
with the last expectation taken over $\epsilon_i$ and data $X_i$, $1\leq i\leq n$.
	
	We proceed with a contraction-style proof. Condition on $X_1,\ldots,X_n$ and $\epsilon_2,\ldots,\epsilon_n$, and write out the expectation with respect to $\epsilon_1$:
	\begin{align*}
		& \quad  \frac{1}{2} \sup_{f\in\F} \left\{ \frac{1+c}{n}\sum_{i=2}^n \epsilon_i(f(X_i)-f^*(X_i))^2 - c\emp(f-f^*)^2  + \frac{1+c}{n}(f(X_1)-f^*(X_1))^2 \right\}	\\
		&+ \frac{1}{2} \sup_{g\in\F} \left\{ \frac{1+c}{n}\sum_{i=2}^n \epsilon_i(g(X_i)-f^*(X_i))^2 - c\emp(g-f^*)^2  - \frac{1+c}{n}(g(X_1)-f^*(X_1))^2  \right\}\\
		&\leq \frac{1}{2} \sup_{f,g\in\F} \left\{ \frac{1+c}{n}\sum_{i=2}^n \epsilon_t(f(X_i)-f^*(X_i))^2 - c\emp(f-f^*)^2  + \frac{1+c}{n}\sum_{i=2}^n \epsilon_t(g(X_i)-f^*(X_i))^2  \right.\\
		&\left.\hspace{3in} - c\emp(g-f^*)^2 + \frac{4K(1+c)}{n}|f(X_1)-g(X_1)| \right\}	
	\end{align*}
	The absolute value can be dropped since the expression is symmetric in $f,g$. We obtain an upper bound of 
	\begin{align*}
		&\frac{1}{2} \sup_{f,g\in\F} \left\{ \frac{1+c}{n}\sum_{i=2}^n \epsilon_t(f(X_i)-f^*(X_i))^2 - c\emp(f-f^*)^2  + \frac{1+c}{n}\sum_{i=2}^n \epsilon_t(g(X_i)-f^*(X_i))^2  \right.\\
		&\left.\hspace{3in} - c\emp(g-f^*)^2 + \frac{4K(1+c)}{n}(f(X_1)-g(X_1)) \right\}	\\
		&=\En_{\epsilon_1}\sup_{f\in\F} \left\{ \frac{1+c}{n}\sum_{i=2}^n \epsilon_i(f(X_i)-f^*(X_i))^2 - c\emp(f-f^*)^2  + \frac{4K(1+c)}{n}\epsilon_1 f(X_1)\right\} 
	\end{align*}
	Proceeding in this fashion for $\epsilon_2$ until $\epsilon_n$, we conclude
	\begin{align*}
		&\En\sup_{f\in\F}\left\{ \En(f-f^*)^2 - (1+2c)\emp(f-f^*)^2 \right\} \\
		&\leq  \En \sup_{f\in\F} \left\{ \frac{4K(1+c)}{n}\sum_{i=1}^n \epsilon_t (f(X_i)-f^*(X_i)) - \frac{c}{n}\sum_{i=1}^n  (f(X_i)-f^*(X_i))^2 \right\}
	\end{align*}
	where we added $f^*$ back in for free since random signs are zero-mean.	
\end{proof}

\begin{proof}[\textbf{Proof of Theorem~\ref{thm:unbounded-offset}}]
	We start with the deterministic upper bound \eqref{emp.quad.low.2.bounded} on excess loss (see the proof of Theorem~\ref{thm:bounded_offset}):
	\begin{align}
		\label{emp.quad.low.2}
		\sup_{h \in \H} \left\{ (\emp-\En)  [2\xi h] + \En h^2 - (1+c) \cdot \emp h^2  \right\}
	\end{align}
	where $h = f- f^* \in \H $.
Define
\begin{align*}
	U_{X_i,Y_i}(h) &= 2\xi_ih(X_i) - \mathbb{E} [2\xi h] + \En h^2 - (1+c) \cdot h(X_i)^2, \\
	V_{X_i,Y_i}(h) &=  2\xi_ih(X_i) - \mathbb{E} [2\xi h] - \En h^2 + (1-c') \cdot h(X_i)^2 .
\end{align*}
where $c'$ will be specified later. We now prove a version of probabilistic symmetrization lemma \cite{GinZin84,Men03fewnslt} for
\begin{align}
	\mbb{P} \left( \sup_{h \in \H}  \sum_{i=1}^n U_{X_i,Y_i}(h)  > x \right).
\end{align}
Note that unlike the usual applications of the technique in the literature, we perform symmetrization with the quadratic terms. Define 
\begin{align}
	\B =\left\{ \sup_{h \in \H} \sum_{i=1}^n U_{X_i,Y_i}(h) > x \right\}, ~~
	\beta = \inf_{h \in \H} \mbb{P} \left( \sum_{i=1}^n V_{X_i,Y_i}(h) < \frac{x}{2} \right).
\end{align}
Clearly for $\{X_i, Y_i\}_{i=1}^n \in \B$, there exists a $h \in \H$ satisfies condition in $\B$. If in addition $h$ satisfies
$$
\sum_{i=1}^n V_{X_i',Y_i'}(h)<\frac{x}{2}
$$
then
$$\sum_{i=1}^n U_{X_i,Y_i}(h) - V_{X_i',Y_i'}(h) > \frac{x}{2}$$
and therefore
$$\sup_{h \in \H} \sum_{i=1}^n U_{X_i,Y_i}(h) - V_{X_i',Y_i'}(h) > \frac{x}{2}.$$
The latter can be written as
\begin{align*}
	&\sup_{h \in \H} \left\{ \sum_{i=1}^n 2\xi_ih(X_i) - 2\xi_i' h(X_i') + 2 \En h^2 - (1+c) \cdot h(X_i)^2 - (1-c') \cdot h(X_i')^2\right\}  > \frac{x}{2} .
\end{align*}
Then for this particular $h$,
\begin{align*}
	\beta & = \inf_{g \in \cH} \mbb{P} \left( \sum_{i=1}^n  V_{X_i',Y_i'}(g) < \frac{x}{2} \right) 
	\leq \mbb{P} \left( \sum_{i=1}^n  V_{X_i',Y_i'}(h) < \frac{x}{2} \right) \\
	& \leq \mbb{P} \left( \sum_{i=1}^n U_{X_i,Y_i}(h) -  V_{X_i',Y_i'}(h) > \frac{x}{2} \right) 
	\leq \mbb{P} \left(\sup_{h \in \H}  \sum_{i=1}^n U_{X_i,Y_i}(h) -  V_{X_i',Y_i'}(h) > \frac{x}{2} \right).
\end{align*}
Note that the right-hand-side does not depend on $h$. We integrate over $\{X_i, Y_i\}_{i=1}^n \in \B$ to obtain
\begin{align}
	&\beta \cdot \mbb{P}\left( \sup_{h \in \H} \sum_{i=1}^n U_{X_i,Y_i}(h) > x \right) \notag\\ 
	& \leq \mbb{P} \left(\sup_{h \in \H} n \cdot \left\{ 2(\emp - \emp')[\xi h] + 2 \En h^2- (1+c) \cdot \emp h^2 - (1 - c')\cdot \emp' h^2 \right\} > \frac{x}{2} \right) \label{eq:RHS1}
\end{align}
Next, we apply Assumption \ref{Assump:Low-Iso-Bd} with $\epsilon = c/4  = 1/72$ to terms in \eqref{eq:RHS1} to construct an offset Rademacher process. Note
$$
\frac{2}{1-\epsilon} < 2 ( 1+ 2 \epsilon)  = 2 + c.
$$
We can now choose $\tilde{c}, c'>0$ in  that satisfy
\begin{align}
	\frac{2}{1 - \epsilon} \leq 2+c-c'-2\tilde{c} ~~~~\Longleftrightarrow~~~~ 1 - (1-c'-\tilde{c})(1-\epsilon) \leq (1+c - \tilde{c})(1-\epsilon) -1.
\end{align}
Choose $b$ now such that
\begin{align}
\label{eq:b}
1 - (1-c'-\tilde{c})(1-\epsilon) \leq b \leq (1+c - \tilde{c})(1-\epsilon) -1.
\end{align}
Then we have on the set $\H$, applying lower isometry bound and Eq.~\eqref{eq:b}, with probability at least $1 - 2\delta$, 
\begin{align*}
	\emp (f - f^*)^2 \geq (1-\epsilon)  \cdot \En (f - f^*)^2 ~~\Longrightarrow~~ (1+b) \En h^2- (1+c) \cdot \emp h^2 \leq -\tilde{c} \cdot \emp h^2, \\
	\emp' (f - f^*)^2 \geq (1-\epsilon) \cdot \En (f - f^*)^2 ~~\Longrightarrow~~  (1-b) \En h^2 - (1 - c')\cdot \emp' h^2  \leq -\tilde{c} \cdot \emp' h^2.
\end{align*}
Thus we can continue bounding the expression in \eqref{eq:RHS1} as
\begin{align*}
	& \quad \sup_{h \in \H} n \cdot \left\{ 2(\emp - \emp')[\xi h] + 2 \En h^2- (1+c) \cdot \emp h^2 - (1 - c')\cdot \emp' h^2 \right\} \\
	& = \sup_{h \in \H  } n \cdot \left\{ 2(\emp - \emp')[\xi h] + (1+b) \En h^2- (1+c) \cdot \emp h^2 + (1-b) \En h^2 - (1 - c')\cdot \emp' h^2 \right\} \\
	& \leq \sup_{h \in \H } n \cdot \left\{ 2(\emp - \emp')[\xi h] -\tilde{c} \cdot \emp h^2  - \tilde{c} \cdot \emp' h^2 \right\} \\
\end{align*}
For the probability of deviation, we obtain
\begin{align*}
        & \quad \beta \cdot \mbb{P}\left( \sup_{h \in \H} \sum_{i=1}^n U_{X_i,Y_i}(h) > x \right) \\ 
	& \leq \mbb{P} \left(\sup_{h \in \H} n \cdot \left\{ 2(\emp - \emp')[\xi h] -\tilde{c} \cdot \emp h^2 - \tilde{c} \cdot \emp' h^2 \right\} > \frac{x}{2}  \right) + 2\delta \label{eq:plug-emr-sq-bd}\\
	& =  \mbb{P} \left(\sup_{h \in \H} n \cdot \left\{ 2(\emp - \emp')[\epsilon\xi h] - \tilde{c} \cdot \emp h^2 -  \tilde{c} \cdot \emp' h^2 \right\} > \frac{x}{2}  \right)+ 2\delta  \\
	& \leq 2 \mbb{P} \left( \sup_{h \in \H}  \left\{ \sum_{i=1}^n 2\epsilon_i \xi_i h(X_i) - \tilde{c} \cdot  \sum_{i=1}^n h(X_i)^2 \right\} > \frac{x}{4}  \right) + 2 \delta.
\end{align*}
%
	To estimate $\beta$, write
\begin{align}
	\beta & = \inf_{h \in \H} \mbb{P} \left( \sum_{i=1}^n  V_{X_i,Y_i}(h) < \frac{x}{2} \right) \\
	& =  1 - \sup _{h \in \H} \mbb{P} \left( \sum_{i=1}^n  2\xi_ih(X_i) - \mathbb{E} [2\xi h] -  \En h^2 + (1-c') \cdot h(X_i)^2 \geq \frac{x}{2} \right).
\end{align}

Let's bound the last term in above equation, for any $h \in \H$
\begin{align}
& \mbb{P}\left( (\emp - \En) [2\xi h]+(1-c')\emp h^2 - \En h^2 > \frac{x}{2n}  \right) \\
\leq & \mbb{P}\left( (\emp - \En)  [2\xi h] > \frac{x}{2n} + \frac{c'}{2} \En h^2 \right)+\mbb{P}\left( (\emp-\En) [h^2]  > \frac{c'}{2(1-c')} \En h^2 \right). \label{beta.eq} \\
\end{align}
Define
$$
A := \sup_{h \in \H} \frac{\En h^4}{(\En h^2)^2} ~~~\text{and}~~~ B:= \sup_{X, Y} \En \xi^4.
$$
Then for the second term in Eq~\eqref{beta.eq}, using Chebyshev's inequality
\begin{align*}
\mbb{P}\left( (\emp-\En) [h^2]  > \frac{c'}{2(1-c')} \En h^2 \right)  & \leq \frac{4(1-c')^2A}{c'^2n} \leq 1/4
\end{align*}
if $$n \geq \frac{16 (1-c')^2 A}{c'^2}.$$
For the first term in Eq~\eqref{beta.eq}, note
$$
{\sf Var}[2\xi h] \leq 4 \En [\xi^2 h^2] \leq 4 \sqrt{AB} \cdot \En h^2 
$$
and thus through Chebyshev inequality
\begin{align*}
\mbb{P}\left( (\emp - \En)  [2\xi h] > \frac{x}{2n} + \frac{c'}{2} \En h^2 \right) & \leq \frac{4 \sqrt{AB} \cdot \En h^2}{n\left( \frac{x}{2n} + \frac{c'}{2} \En h^2  \right)^2} \\
& \leq  \frac{4 \sqrt{AB} \cdot \En h^2}{n \cdot 4 \frac{x}{2n} \cdot \frac{c'}{2} \En h^2}  \leq \frac{1}{4}
\end{align*}
if 
$$
x \geq \frac{16\sqrt{AB}}{c'}.
$$
Assemble above bounds, for any $h \in \H$
$$
 \sup _{h \in \H} \mbb{P} \left( \sum_{i=1}^n  2\xi_ih(X_i) - \mathbb{E} [2\xi h] -  \En h^2 + (1-c') \cdot h(X_i)^2 \geq \frac{x}{2} \right)\leq \frac{1}{2}
$$
which further implies $\beta \geq 1/2$ for any $x> \frac{16\sqrt{AB}}{c'}$ and whenever
 $$n > \frac{16 (1-c')^2 A}{c'^2}.$$
 
Under the above regime,
\begin{align*}
	 &\frac{1}{2} \mbb{P}\left( \sup_{h \in \H} \sum_{i=1}^n U_{X_i,Y_i}(h)  > x \right)\leq 2 \mbb{P}\left( \sup_{h \in \H}  \left\{ \sum_{i=1}^n \epsilon_i \xi_i h(X_i) - \tilde{c} \cdot  \sum_{i=1}^n h(X_i)^2 \right\} > \frac{x}{4}  \right)+ 2 \delta
\end{align*}
and so
\begin{align*}
	&\quad \mbb{P}\left( \sup_{h \in \H} \sum_{i=1}^n U_{X_i,Y_i}(h)  > 4t \right)  \\
	&\leq 4 \mbb{P}\left( \sup_{h \in \H}  \left\{ \sum_{i=1}^n \epsilon_i \xi_i h(X_i) - \tilde{c} \cdot  \sum_{i=1}^n h(X_i)^2 \right\} >t  \right) + 4\delta.
\end{align*}
We conclude by writing
\begin{align*}
	&\mbb{P}\left( \sup_{h \in \H} (\emp-\En)[2\xi h] + \En h^2 - (1+c) \cdot \emp h^2 > 4t  \right) \\
	& \leq 4 \mbb{P} \left( \sup_{h \in \H} \frac{1}{n} \sum_{i=1}^n \epsilon_i \xi_i h(X_i) - \tilde{c} \cdot  \sum_{i=1}^n h(X_i)^2  > t  \right) + 4 \delta.
\end{align*}

\end{proof}

\begin{proof}[\textbf{Proof of Lemma~\ref{eq:finite}}]
	Using a standard argument,
	\begin{align*}
		\En_\epsilon \max_{v\in V} \left[ \sum_{i=1}^n \epsilon_i v_i - Cv_i^2 \right] &\leq \frac{1}{\lambda}\log\sum_{v\in V} \En_\epsilon \exp\left\{\sum_{i=1}^n \lambda \epsilon_i v_i - \lambda Cv_i^2\right\}.
	\end{align*}
	For any $v\in V$,
	\begin{align*}
		\En_\epsilon \exp\left\{\sum_{i=1}^n \lambda\epsilon_i v_i - \lambda Cv_i^2\right\} 
		\leq \exp\left\{\sum_{i=1}^n \lambda^2 v_i^2/2 - \lambda Cv_i^2\right\} \leq 1
	\end{align*}
	by setting $\lambda = 2C $. The first claim follows. For the second claim,
	\begin{align*}
		\Pr{\max_{v\in V} \left[ \sum_{i=1}^n \epsilon_i v_i - Cv_i^2 \right] \geq \frac{1}{2C} \log ( N/\delta)} 
		&\leq \En\exp\left\{ \lambda \max_{v\in V} \left[ \sum_{i=1}^n \epsilon_i v_i - Cv_i^2 \right] - \lambda\frac{1}{2C} \log( N/\delta) \right\}\\
		&\leq \sum_{v\in V} \En \exp\left\{ \lambda \left[ \sum_{i=1}^n \epsilon_i v_i - Cv_i^2 \right] - \lambda\frac{1}{2C} \log( N/\delta)  \right\}\\
		&\leq \sum_{v\in V} \exp\left\{  - \log (N/\delta) \right\} = \delta.
	\end{align*}

Now let's move to the case where $\xi$, the noise is unbounded.
\begin{align*}
	&\En_{\epsilon} \frac{1}{n} \max_{v\in V } \left\{ \sum_{i=1}^n \epsilon_i \xi_i v_i - C v_i^2 \right\} \leq \frac{1}{n\lambda} \log \En_{\epsilon} \sum_{v\in V} \exp \left( \lambda\sum_{i=1}^n \epsilon_i \xi_i v_i -\lambda C v_i^2    \right) \\
	&\leq \frac{1}{n\lambda} \log  \sum_{v\in V} \exp \left( \sum_{i=1}^n  \frac{\lambda^2}{2} \xi_i^2 v_i^2  -   \lambda C v_i^2   \right) \leq \max_{v \in V \setminus \{0\}} \frac{\sum_{i=1}^n v_i^2 \xi_i^2}{2C \sum_{i=1}^n v_i^2} \cdot \frac{\log N}{n}
\end{align*}
if we take $\lambda =   \min_{v \in V \setminus \{0\}} \frac{2C \sum_{i=1}^n v_i^2}{\sum_{i=1}^n v_i^2 \xi_i^2}$. The high probability statement follows also use this particular choice of $\lambda$.

\end{proof}

\begin{proof}[\textbf{Proof of Lemma~\ref{lem:offset_estimate_chaining}}]
	The proof proceeds as in \citep{RakSri14nonparam}. Fix $\gamma\in[0,1]$. By definition of a cover, there exists a set $V\subset \reals^n$ vectors of size $ N=\cN_2(\G,\gamma)$ with the following property: for any $g\in\G$, there exists a $v=v[g]\in V$ such that
	$$\frac{1}{n}\sum_{i=1}^n (g(z_i)-v_i)^2 \leq \gamma^2.$$
	Then we may write, 
	\begin{align}
		\label{eq:decomp}
		& \quad \En_\epsilon \sup_{g\in\G} \left[ \sum_{t=1}^n \epsilon_i g(z_i) - Cg(z_i)^2 \right] \\
		& \leq \En_\epsilon \sup_{g\in\G} \left[ \frac{1}{n}\sum_{t=1}^n \epsilon_i (g(z_i)-v[g]_i) \right]
		+ \En_\epsilon \sup_{g\in\G} \left[ \sum_{t=1}^n (C/4)v[g]_i^2- Cg(z_i)^2  \right]
		+ \En_\epsilon \sup_{g\in\G} \left[ \sum_{t=1}^n \epsilon_i v[g]_i - (C/4)v[g]_i^2 \right] 
	\end{align}
	We now argue that the second term is nonpositive. More precisely, we claim that for any $g\in\G$, 
	\begin{align}
		\label{eq:req_norm_comparison}
		\frac{1}{4}\sum_{t=1}^n v[g]_i^2 \leq \sum_{t=1}^n g(z_i)^2
	\end{align}
	for some element $v[g]\in V \cup \{\bf{0}\}$. First, consider the case $\sum_{t=1}^n g(z_i)^2\leq \gamma^2$. Then $v[g]=\bf{0}$ is an element $\gamma$-close to values of $g$ on the sample, and \eqref{eq:req_norm_comparison} is trivially satisfied. Next, consider the case $\sum_{t=1}^n g(z_i)^2 > \gamma^2$ and write $u=(g(z_1),\ldots,g(z_n))$. The triangle inequality for the Euclidean norm yields
	$$\|v[g]\| \leq \|v[g]-u\|+\|u\| \leq \gamma+\|u\| \leq 2\|u\|,$$
	establishing non-positivity of the second term in \eqref{eq:decomp}. 
	The third term in \eqref{eq:decomp} is upper bounded with the help of Lemma~\ref{eq:finite} as
	\begin{align*}
		\En_\epsilon \max_{g\in\G} \left[ \sum_{t=1}^n \epsilon_i v[g]_i - (C/4)v[g]_i^2 \right] \leq \frac{2}{C}\log \cN_2(\G,\gamma)
	\end{align*}
	Finally, the first term in \eqref{eq:decomp} is upper bounded using the standard chaining technique, keeping in mind that the $\ell_2$-diameter of the indexing set is at most $\gamma$.
\end{proof}

\begin{proof}[\textbf{Proof of Lemma~\ref{lma:prob-chaining}}]
	The proof is similar to the proof of Lemma~\ref{lem:offset_estimate_chaining}. We proceed with the following decomposition:
	\begin{align*}
		\sup_{g\in\G} \left[ \frac{1}{n}\sum_{t=1}^n \epsilon_i g(z_i) - Cg(z_i)^2 \right] \leq  \sup_{g\in\G} \left[ \frac{1}{n}\sum_{t=1}^n \epsilon_i (g(z_i)-v[g]_i) \right] +  \sup_{g\in\G} \left[ \frac{1}{n}\sum_{t=1}^n \epsilon_i v[g]_i - (C/4)v[g]_i^2 \right].
	\end{align*}
For the first term, we can employ the traditional high probability chaining bound. For some $c>0$, the following holds,
	\begin{align*}
		\mbb{P}_\epsilon \left( \sup_{g\in\G} \left[ \frac{1}{n}\sum_{t=1}^n \epsilon_i (g(z_i)-v[g]_i) \right] > u \cdot \inf_{\alpha \in [0,\gamma]} \left\{ 4\alpha + \frac{12}{\sqrt{n}} \int_{\alpha}^{\gamma} \sqrt{\log \mc{N}_2(\G,\delta)} d\delta \right\} \right) \leq  \frac{2}{1-e^{-2}} \exp(-cu^2).
	\end{align*}
For the second term, 
	\begin{align*}
		\mbb{P}_\epsilon \left(  \sup_{g\in\G} \left[ \frac{1}{n}\sum_{t=1}^n \epsilon_i v[g]_i - (C/4)v[g]_i^2 \right] >  \frac{2}{C} \frac{\log \mc{N}_2(\G,\gamma) + u}{n} \right) \leq \exp(-u).
	\end{align*}
	Combining the above two bounds, we have
	\begin{align*}
		&\quad \mbb{P}_\epsilon \left( \sup_{g\in\G} \left[ \frac{1}{n}\sum_{t=1}^n \epsilon_i g(z_i) - Cg(z_i)^2 \right] >  u \cdot \inf_{\alpha \in [0,\gamma]} \left\{ 4\alpha + \frac{12}{\sqrt{n}} \int_{\alpha}^{\gamma} \sqrt{\log \mc{N}_2(\G,\delta)} d\delta \right\}  +  \frac{2}{C} \frac{\log \mc{N}_2(\G,\gamma) + u}{n} \right) \\
		& \leq \mbb{P}_\epsilon \left( \sup_{g\in\G} \left[ \frac{1}{n}\sum_{t=1}^n \epsilon_i (g(z_i)-v[g]_i) \right] > u \cdot \inf_{\alpha \in [0,\gamma]} \left\{ 4\alpha + \frac{12}{\sqrt{n}} \int_{\alpha}^{\gamma} \sqrt{\log \mc{N}_2(\G,\delta)} d\delta \right\} \right) \\
		& \quad + \mbb{P}_\epsilon \left(  \sup_{g\in\G} \left[ \frac{1}{n}\sum_{t=1}^n \epsilon_i v[g]_i - (C/4)v[g]_i^2 \right] >  \frac{2}{C} \frac{\log \mc{N}_2(\G,\gamma) + u}{n} \right) \\
		& \leq \frac{2}{1-e^{-2}} \exp(-cu^2) + \exp(-u).
	\end{align*}
\end{proof}

\begin{proof}[\textbf{Proof of Theorem~\ref{thm:crit_radius}}]
Denote by $\B$ the unit ball with respect to $\ell_2$ distance, $\B = \{ h: (\En h^2)^{1/2} \leq 1 \}$, and let $\S$ denote the unit sphere.
Choosing any $h \in\cH \backslash r\B$, we have $\| h \|_{\ell_2} >r \deq \alpha_n(\H, \kappa',\delta)$ with $k'$ to be chosen later. Under the assumption that $\H$ is star-shaped, we know $h_r:=r/\| h \|_{\ell_2}\cdot h \in\cH$, thus
\begin{align*}
 &\frac{2}{n} \sum_{i=1}^n \epsilon_i \xi_i h(X_i) - c' \frac{1}{n} \sum_{i=1}^n h^2(X_i) \\
 =&  \frac{\| h \|_{\ell_2}}{r} \frac{2}{n} \sum_{i=1}^n \epsilon_i \xi_i h_r(X_i) - \left(\frac{\| h \|_{\ell_2}}{r}\right)^2  c' \frac{1}{n} \sum_{i=1}^n h_r^2(X_i) \\
 =& \frac{\| h \|_{\ell_2}}{r} \left\{ \frac{2}{n} \sum_{i=1}^n \epsilon_i \xi_i h_r(X_i) - c' \frac{1}{n} \sum_{i=1}^n h_r^2(X_i) \right\} - \frac{\| h \|_{\ell_2}}{r} \left(\frac{\| h \|_{\ell_2}}{r}-1\right) c' \frac{1}{n} \sum_{i=1}^n h_r^2(X_i).
\end{align*}
Comparing the supremum of the offset Rademacher process outside the ball $r\B$ with the one inside the ball $r\B$, we have
\begin{align}
	& \sup_{h\in\cH \backslash r\B} \left\{ \frac{2}{n} \sum_{i=1}^n \epsilon_i \xi_i h(X_i) - c' \frac{1}{n} \sum_{i=1}^n h^2(X_i) \right\} - \sup_{h\in\cH \cap r\B} \left\{ \frac{2}{n} \sum_{i=1}^n \epsilon_i \xi_i h(X_i) - c' \frac{1}{n} \sum_{i=1}^n h^2(X_i) \right\} \notag\\
	& \leq \sup_{h\in\cH \backslash r\B} \left\{  \left(\frac{\| h \|_{\ell_2}}{r}-1\right) \sup_{h_r \in \H \cap r\B} \left\{ \frac{2}{n} \sum_{i=1}^n \epsilon_i \xi_i h_r(X_i) - c' \frac{1}{n} \sum_{i=1}^n h_r^2(X_i) \right\} \right. \notag\\
	&\left.\hspace{1in} - \frac{\| h \|_{\ell_2}}{r} \left(\frac{\| h \|_{\ell_2}}{r}-1\right) \inf_{h_r \in \H \cap r\S}\left\{ c' \frac{1}{n} \sum_{i=1}^n h_r^2(X_i) \right\} \right\} \notag\\
	& \leq \sup_{h\in\cH \backslash r\B} \left\{  \left(\frac{\| h \|_{\ell_2}}{r}-1\right) \left\{ \sup_{h_r \in \H \cap r\B} \left\{ \frac{2}{n} \sum_{i=1}^n \epsilon_i \xi_i h_r(X_i) - c' \frac{1}{n} \sum_{i=1}^n h_r^2(X_i) \right\} - \inf_{h_r \in \H_r \cap r\S}\left\{ c' \frac{1}{n} \sum_{i=1}^n h_r^2(X_i) \right\} \right\}  \right\} \label{eq:cr.rad}.
\end{align}

If
$$
 \kappa' r^2 \leq c' (1-\epsilon) r^2,
$$
we can apply the lower isometry bound \ref{Assump:Low-Iso-Bd} and conclude
$$
\sup_{h\in\cH \cap r\B} \left\{ \frac{2}{n} \sum_{i=1}^n \epsilon_i \xi_i h(X_i) - c' \frac{1}{n} \sum_{i=1}^n h^2(X_i) \right\} \leq k'r^2\leq c' (1-\epsilon) r^2 \leq  \inf_{h_r \in \H \cap r\S}\left\{ c' \frac{1}{n} \sum_{i=1}^n h_r^2(X_i) \right\}
$$
with probability at least $1-2\delta$.

 Under this event, the difference of terms in \eqref{eq:cr.rad} is smaller than $0$, and we conclude
\begin{align*}
& \sup_{h\in\cH \backslash r\B} \left\{ \frac{2}{n} \sum_{i=1}^n \epsilon_i \xi_i h(X_i) - c' \frac{1}{n} \sum_{i=1}^n h^2(X_i) \right\} - \sup_{h\in\cH \cap r\B} \left\{ \frac{2}{n} \sum_{i=1}^n \epsilon_i \xi_i h(X_i) - c' \frac{1}{n} \sum_{i=1}^n h^2(X_i) \right\}\\
\leq  & \sup_{h\in\cH \backslash r\B} \left\{  \left(\frac{\| h \|_{\ell_2}}{r}-1\right) \left\{ \sup_{h_r \in \H \cap r\B} \left\{ \frac{2}{n} \sum_{i=1}^n \epsilon_i \xi_i h_r(X_i) - c' \frac{1}{n} \sum_{i=1}^n h_r^2(X_i) \right\} - \inf_{h_r \in \H_r \cap r\S}\left\{ c' \frac{1}{n} \sum_{i=1}^n h_r^2(X_i) \right\} \right\}  \right\} \\
\leq & \sup_{h\in\cH \backslash r\B} \left\{  \left(\frac{\| h \|_{\ell_2}}{r}-1\right)  \left( \kappa' r^2 - c' (1-\epsilon) r^2 \right) \right\} \leq 0
\end{align*}
Thus the excess loss is upper bounded by the offset Rademacher process, and the latter is further bounded by the process restricted within the critical radius:
\begin{align}
 	\sup_{h\in\cH} \left\{ \frac{2}{n} \sum_{i=1}^n \epsilon_i \xi_i h(X_i) - c' \frac{1}{n} \sum_{i=1}^n h^2(X_i) \right\} 
 	& \leq  \sup_{h\in\cH \cap r\B} \left\{ \frac{2}{n} \sum_{i=1}^n \epsilon_i \xi_i h(X_i) - c' \frac{1}{n} \sum_{i=1}^n h^2(X_i)\right\}\\
	& \leq  \alpha^2_n(\H, c' (1-\epsilon), \delta)
\end{align}
with probability at least $1 -2 \delta $.

%
\end{proof}

\begin{proof}[\textbf{Proof of Theorem~\ref{thm: mini-low-bd}}]
Denote $\F \subset \G = \F + \star(\F-\F)$. The minimax excess loss can be written as
\begin{align*}
	&\inf_{\hat{g} \in \G} \sup_{P}  \left\{ \En (\hat{g} - Y)^2 - \inf_{f \in \F} \En (f - Y)^2 \right\} \\
	& = \inf_{\hat{g} \in \G} \sup_{P}  \left\{   \left\{ - \En 2 Y \hat{g} + \En \hat{g}^2 \right\} + \sup_{f \in \F} \left\{ \En 2 Y f - \En f^2 \right\} \right\}.
\end{align*}
Now let's construct a particular distribution $P$ in the following way: take any $x_1, x_2,..., x_{(1+c)n} \in \X$ and let $P_X$ be the uniform distribution on these $(1+c)n$ points. For any $\epsilon = (\epsilon_1, \ldots, \epsilon_{(1+c)n}) \in \{ \pm 1\}^{(1+c)n}$, denote the distribution $P_{\epsilon}$ of $(X, Y)$ indexed by $\epsilon$ to be: $X$ is sampled from $P_X$, and $Y_{| X = x_i} = \epsilon_i,~\forall 1\leq i \leq (1+c)n$. Note here $\hat{g} : (X, Y)^{\otimes n} \rightarrow \F+\star(\F-\F)$. Now we proceed with this particular distribution
\begin{align*}
	&\inf_{\hat{g} \in \G} \sup_{P} \left\{   \left\{ - \En 2 Y \hat{g} + \En \hat{g}^2 \right\} + \sup_{f \in \F} \left\{ \En 2 Y f - \En f^2 \right\} \right\} \\
	&\geq \inf_{\hat{g} \in \G}  \sup_{\{x_i\}_{i=1}^{(1+c)n} \in \X^{\otimes (1+c)n}} \En_{\epsilon} \left\{   \left\{ - \En 2 Y \hat{g} + \En \hat{g}^2 \right\} + \sup_{f \in \F} \left\{ \En 2 Y f - \En f^2 \right\} \right\} \\
	& \geq   \sup_{\{x_i\}_{i=1}^{(1+c)n} \in \X^{\otimes (1+c)n}} \En_{\epsilon} \left\{  \sup_{f \in \F} \frac{1}{(1+c)n}\left\{\sum_{i=1}^{(1+c)n} 2  \epsilon_i f(x_i) - f(x_i)^2 \right\} \right\}  \\
	&\hspace{2in} - \sup_{\hat{g} \in \G}  \sup_{\{x_i\}_{i=1}^{(1+c)n} \in \X^{\otimes (1+c)n}} \En_{\epsilon} \left\{   2 \En Y \hat{g} - \En \hat{g}^2   \right\}.
\end{align*}
Note that the first term is exactly $\Rad^{\sf  o}((1+c)n, \F)$. Let us upper bound the second term. Denote the indices of a uniform $n$ samples from $(1+c)n$ samples $\{x_i\}_{i=1}^{(1+c)n}$ with replacement as $i_1,i_2,\ldots, i_n$, and $I$ be the set of unique indices $|I|\leq n$. Observe that $\hat{g}$ is a function of $(x_{I},Y_{I})$ only, independent of $\epsilon_{j}, j\notin I$.
\begin{align}
	&\sup_{\hat{g} \in \G}  \sup_{\{x_i\}_{i=1}^{(1+c)n} \in \X^{\otimes (1+c)n}} \En_{\epsilon} \left\{   2 \En Y \hat{g} - \En \hat{g}^2   \right\} \notag\\
	&  \leq \sup_{\hat{g} \in \G}  \sup_{\{x_i\}_{i=1}^{(1+c)n} \in \X^{\otimes (1+c)n}}  \En_{\epsilon} \En_{i_1,\ldots, i_n}   \left\{ \frac{1}{(1+c)n} \sum_{i=1}^{(1+c)n} \left\{ 2 \epsilon_i  \hat{g}(x_i) -  \hat{g}(x_i)^2  \right\}\right\} \notag\\
	& =  \sup_{\hat{g} \in \G}  \sup_{\{x_i\}_{i=1}^{(1+c)n} \in \X^{\otimes (1+c)n}}  \En_{i_1,\ldots, i_n} \En_{\epsilon} \left\{ \frac{1}{(1+c)n} \sum_{i=1}^{(1+c)n} \left\{ 2 \epsilon_i  \hat{g}(x_i) -  \hat{g}(x_i)^2  \right\}\right\}  \label{eq:long}
\end{align}
Conditionally on $i_1, i_2,...,i_n$, 
$$
 \frac{1}{(1+c)n}\sum_{i \notin I} \left\{ 2 \epsilon_i \hat{g}(x_i) -  \hat{g}(x_i)^2  \right\} = 0 -\frac{1}{(1+c)n}\sum_{i \notin I}   \hat{g}(x_i)^2  < 0 .
$$
Expression in \eqref{eq:long} is upper bounded by 
\begin{align*}
	&\sup_{\hat{g} \in \G}  \sup_{\{x_i\}_{i=1}^{(1+c)n} \in \X^{\otimes (1+c)n}}  \En_{i_1,\ldots, i_n} \En_{\epsilon}   \left\{ \frac{1}{(1+c)n} \sum_{i\in I} \left\{ 2 \epsilon_i \hat{g}(x_i) -  \hat{g}(x_i)^2  \right\}\right\}  \\
	& \leq \sup_{\hat{g} \in \G}   \En_{i_1,\ldots, i_n}   \sup_{\{x_i\}_{i=1}^{|I|} \in \X^{\otimes |I|}} \En_{\epsilon}  \left\{ \frac{1}{(1+c)n} \sum_{i\in I} \left\{ 2 \epsilon_i \hat{g}(x_i) -  \hat{g}(x_i)^2  \right\}\right\}   \\
	& \leq \sup_{\hat{g} \in \G}      \sup_{\{x_i\}_{i=1}^{n} \in \X^{\otimes n}} \En_{\epsilon}   \left\{ \frac{1}{(1+c)n} \sum_{i=1 }^{cn} \left\{ 2 \epsilon_i \hat{g}(x_i) - \hat{g}(x_i)^2  \right\}\right\} \\
	& \leq     \sup_{\{x_i\}_{i=1}^{n} \in \X^{\otimes n}} \En_{\epsilon}   \sup_{g \in \G}   \left\{ \frac{1}{(1+c)n} \sum_{i=1 }^{cn} \left\{ 2 \epsilon_i g(x_i) - g(x_i)^2  \right\}\right\} \\
	&= \frac{c}{1+c}\Rad^{\sf o}(cn, \G).
\end{align*}
Thus the claim holds.

\end{proof}

\begin{proof}[\textbf{Proof of Lemma~\ref{lem:finite_agg}}]
From Lemma~\ref{lma:star-cov}, we know for $\H = \F -f^* + \star(\F - \F)$,
\begin{align*}
\log \mc{N}_2(\H, 8 \epsilon) \leq \log \mc{N}_2(\F - f^*, 4\epsilon) + \log \mc{N}_2(\star(\F - \F), 4\epsilon) \leq \log \frac{2}{\epsilon} + 3\log \mc{N}_2(\F,  \epsilon).
\end{align*}
Consider the $\delta$-covering net of $\H$, where for any $h \in \H$, $v[h]$ is the closest point on the net.
\begin{align*}
& \frac{1}{n} \sup_{h \in \H} \left\{ \sum_{i=1}^n 2 \epsilon_i \xi_i h(X_i) - C h(X_i)^2 \right\}   \\
& \leq \frac{1}{n} \sup_{h \in \H} \left\{ \sum_{i=1}^n 2 \epsilon_i \xi_i (h(X_i) - v[h]) - C (h(X_i)^2  -v[h]^2)\right\}  + \frac{1}{n} \sup_{v \in \mc{N}_2(\H, \delta)} \left\{ \sum_{i=1}^n 2 \epsilon_i \xi_i v - C v^2 \right\} \\
& \leq 2  (  \sqrt{\sum_{i=1}^n \xi_i^2/n} + 2C )  \cdot \delta + \frac{1}{n} \sup_{v \in \mc{N}_2(\H, \delta)} \left\{ \sum_{i=1}^n 2 \epsilon_i \xi_i v - C v^2 \right\}.
\end{align*}
The second term is the offset Rademacher for a finite set of cardinality at most $\log (16/\delta) + 3 \log N$, thus applying Lemma~\ref{eq:finite}, 
\begin{align*}
\En_{\epsilon} \frac{1}{n} \sup_{h \in \H} \left\{ \sum_{i=1}^n 2 \epsilon_i \xi_i h(X_i) - C h(X_i)^2 \right\}  & \leq \inf_{\delta>0}\left\{  K  \cdot \delta + M \cdot \frac{3 \log N + \log (16/\delta)}{n} \right\} \\
 & \leq \tilde{C} \cdot \frac{\log (N \vee n)}{n}
\end{align*}
where $K:=2  (  \sqrt{\sum_{i=1}^n \xi_i^2/n} + 2C )$ and $M$ is defined in Equation~\eqref{eq:M}.
We also have the high probability bound via Lemma~\ref{eq:finite}:
\begin{align*}
\mbb{P}_{\epsilon} \left( \frac{1}{n} \sup_{h \in \H} \left\{ \sum_{i=1}^n 2 \epsilon_i \xi_i h(X_i) - C h(X_i)^2 \right\}  \leq   \tilde{C} \cdot \frac{\log (N\vee n) + u }{n} \right) \leq e^{-u}.
\end{align*}
\end{proof}

\section*{Acknowledgements} 
We thank Shahar Mendelson for many helpful discussions and for providing valuable feedback on this  paper.

\bibliographystyle{apalike}
\bibliography{paper}

\begin{thebibliography}{}

\bibitem[Audibert, 2007]{audibert2007progressive}
Audibert, J. (2007).
\newblock Progressive mixture rules are deviation suboptimal.
\newblock {\em Advances in Neural Information Processing Systems}, 20(2).

\bibitem[Bartlett et~al., 2005]{bartlett2005local}
Bartlett, P., Bousquet, O., and Mendelson, S. (2005).
\newblock Local rademacher complexities.
\newblock {\em The Annals of Statistics}, 33(4):1497--1537.

\bibitem[Boucheron et~al., 2013]{boucheron2013concentration}
Boucheron, S., Lugosi, G., and Massart, P. (2013).
\newblock {\em Concentration inequalities: A nonasymptotic theory of
  independence}.
\newblock Oxford University Press.

\bibitem[Bousquet, 2002]{bousquet2002concentration}
Bousquet, O. (2002).
\newblock {\em Concentration inequalities and empirical processes theory
  applied to the analysis of learning algorithms}.
\newblock PhD thesis, Ecole Polytechnique.

\bibitem[Bousquet et~al., 2002]{bousquet2002some}
Bousquet, O., Koltchinskii, V., and Panchenko, D. (2002).
\newblock Some local measures of complexity of convex hulls and generalization
  bounds.
\newblock In Springer, editor, {\em Computational Learning Theory}, pages
  164--171, Sydney, Australia.

\bibitem[Dai et~al., 2012]{dai2012deviation}
Dai, D., Rigollet, P., and Zhang, T. (2012).
\newblock Deviation optimal learning using greedy {Q}-aggregation.
\newblock {\em Annals of Statistics}.

\bibitem[Gin\'e and Zinn, 1984]{GinZin84}
Gin\'e, E. and Zinn, J. (1984).
\newblock Some limit theorems for empirical processes.
\newblock {\em Annals of Probability}, 12(4):929--989.

\bibitem[Koltchinskii, 2011]{koltchinskii2011oracle}
Koltchinskii, V. (2011).
\newblock {\em Oracle Inequalities in Empirical Risk Minimization and Sparse
  Recovery Problems: {\'E}cole d'{\'E}t{\'e} de Probabilit{\'e}s de Saint-Flour
  XXXVIII-2008}, volume 2033.
\newblock Springer.

\bibitem[Koltchinskii and Panchenko, 2000]{KolPan00}
Koltchinskii, V. and Panchenko, D. (2000).
\newblock Rademacher processes and bounding the risk of function learning.
\newblock {\em High Dimensional Probability}, II:443--459.

\bibitem[Lecu{\'e} and Mendelson, 2009]{lecue2009aggregation}
Lecu{\'e}, G. and Mendelson, S. (2009).
\newblock Aggregation via empirical risk minimization.
\newblock {\em Probability Theory and Related Fields}, 145(3):591--613.

\bibitem[Lecu{\'e} and Mendelson, 2013]{lecue2013learning}
Lecu{\'e}, G. and Mendelson, S. (2013).
\newblock Learning subgaussian classes: Upper and minimax bounds.
\newblock {\em arXiv preprint arXiv:1305.4825}.

\bibitem[Mendelson, 2002]{mendelson2002improving}
Mendelson, S. (2002).
\newblock Improving the sample complexity using global data.
\newblock {\em Information Theory, IEEE Transactions on}, 48(7):1977--1991.

\bibitem[Mendelson, 2003]{Men03fewnslt}
Mendelson, S. (2003).
\newblock A few notes on statistical learning theory.
\newblock In Mendelson, S. and Smola, A.~J., editors, {\em Advanced Lectures in
  Machine Learning, LNCS 2600, Machine Learning Summer School 2002, Canberra,
  Australia, February 11-22}, pages 1--40. Springer.

\bibitem[{Mendelson}, 2014a]{Mendelson14}
{Mendelson}, S. (2014a).
\newblock {Learning without Concentration}.
\newblock In {\em Conference on Learning Theory}.

\bibitem[{Mendelson}, 2014b]{Mendelson14general}
{Mendelson}, S. (2014b).
\newblock {Learning without Concentration for General Loss Functions}.
\newblock {\em ArXiv e-prints}.

\bibitem[Mendelson, 2015]{Mendelson15}
Mendelson, S. (2015).
\newblock On aggregation for heavy-tailed classes.
\newblock Preprint.

\bibitem[Rakhlin and Sridharan, 2014]{RakSri14nonparam}
Rakhlin, A. and Sridharan, K. (2014).
\newblock Online non-parametric regression.
\newblock In {\em Conference on Learning Theory}.

\bibitem[Rakhlin et~al., 2015]{RakSriTsy15}
Rakhlin, A., Sridharan, K., and Tsybakov, A. (2015).
\newblock Empirical entropy, minimax regret and minimax risk.
\newblock {\em Bernoulli}.
\newblock Forthcoming.

\bibitem[Yang and Barron, 1999]{yang1999information}
Yang, Y. and Barron, A. (1999).
\newblock Information-theoretic determination of minimax rates of convergence.
\newblock {\em Annals of Statistics}, 27(5):1564--1599.

\end{thebibliography}

\end{document}